\numberwithin{equation}{section}
\newtheorem{theorem}{Theorem}[section]
\newtheorem{proposition}[theorem]{Proposition}
\newtheorem{lemma}[theorem]{Lemma}
\newtheorem{corollary}[theorem]{Corollary}
\newtheorem{remark}[theorem]{Remark}
\theoremstyle{definition}
\newtheorem{definition}[theorem]{Definition}
\newtheorem{xmp}[theorem]{Example}
\newenvironment{example}{\pushQED{\qed}\begin{xmp}}{\popQED\end{xmp}}
\DeclareFontFamily{U}{mathb}{}
\DeclareFontShape{U}{mathb}{m}{n}{<-5.5> mathb5 <5.5-6.5> mathb6 
<6.5-7.5> mathb7 <7.5-8.5> mathb8 <8.5-9.5> mathb9 <9.5-11> mathb10 
<11-> mathb12}{}
\DeclareSymbolFont{mathb}{U}{mathb}{m}{n}
\DeclareMathSymbol{\blackdiamond}{\mathbin}{mathb}{"0C}
\DeclareFontFamily{U}{shuffle}{}
\DeclareFontShape{U}{shuffle}{m}{n}{ <-8>shuffle7 <8->shuffle10}{}
\DeclareMathOperator{\ad}{ad}
\newcommand{\etfrak}{\mathfrak{y}}
\def\bA{\mathbf{A}}
\def\bB{\mathbf{B}}
\def\bF{\mathbf{F}}
\def\bL{\mathbf{L}}
\def\bM{\mathbf{M}}
\def\bS{\mathbf{S}}
\def\bX{\mathbf{X}}
\def\bY{\mathbf{Y}}
\def\bZ{\mathbf{Z}}
\def\bs{\mathbf{s}}
\def\Sesup{\mathscr{S}}
\def\indik{\mathbf 1}
\def\dparl{\mathopen{(\mkern-3mu(}}
\def\dparr{\mathclose{)\mkern-3mu)}}
\newcommand{\R}{\mathbb{R}}
\newcommand{\Rd}{{\mathbb{R}^d}}
\newcommand{\F}{\mathcal{F}}
\newcommand{\G}{\mathcal{G}}
\newcommand{\PM}{\mathbb{P}}
\newcommand{\E}{\mathbb{E}}
\newcommand{\bx}{\mathbf{x}}
\newcommand{\by}{\mathbf{y}}
\newcommand{\N}{{\mathbb{N}}}
\newcommand{\None}{{\mathbb{N}_{\ge1}}}
\newcommand{\Lcal}{\mathcal{L}}
\newcommand{\Id}{\mathrm{Id}}
\newcommand{\I}{\mathbf{I}}
\newcommand{\W}{\mathcal{W}}
\newcommand{\dd}{\mathrm{d}}
\newcommand{\TT}{\mathcal{T}}
\newcommand{\Lie}[2]{\left[ #1, #2 \right]}
\newcommand{\tf}{\TT_0}
\newcommand{\ideal}{\mathcal{I}}
\DeclarePairedDelimiter{\cvbracket}{\langle\mkern-4mu\langle}{\rangle\mkern-4mu\rangle}
\newcommand{\outerbracket}[2]{\cvbracket{#1, #2}}
\newcommand{\innerbracket}[2]{\left\langle #1, #2 \right\rangle}
\newcommand{\innerbracketsmall}[2]{\langle #1, #2 \rangle}
\newcommand{\QV}[1]{\left\langle #1 \right\rangle}
\newcommand{\QVsmall}[1]{\langle #1 \rangle}
\newcommand{\CV}[2]{\left\langle #1, #2 \right\rangle}
\DeclarePairedDelimiter{\abss}{\vert}{\vert}
\def\abs{\abss*}
\DeclarePairedDelimiter{\Abss}{\Vert}{\Vert}
\def\Abs{\Abss*}
\DeclarePairedDelimiter{\Aabss}{\vert\mkern-2.5mu\vert\mkern-2.5mu\vert}{\vert\mkern-2.5mu\vert\mkern-2.5mu\vert}
\def\Aabs{\Aabss*}
\newcommand{\pisym}{\pi_{\mathrm{Sym}}}
\newcommand{\kap}{\pmb{\kappa}}
\newcommand{\kapT}[1]{\pmb{\kappa}_{#1}(T)}
\newcommand{\esig}{\pmb{\mu}}
\newcommand{\KK}{\mathbb{K}}
\newcommand{\fmu}{\pmb{\mu}}
\newcommand{\Qua}{\mathrm{Qua}}
\newcommand{\Cov}{\mathrm{Cov}}
\newcommand{\Sig}{\mathrm{Sig}}
\newcommand{\T}{\mathcal{T}}
\newcommand{\Sy}{\mathcal{S}} %
\newcommand{\hatexp}{\widehat \exp}
\newcommand{\hatlog}{\widehat \log}
\newcommand{\Se}{\mathscr{S}} %
\newcommand{\D}{\mathscr{C}} %
\newcommand{\Sec}{\Se^c} %
\newcommand{\Ma}{\mathscr{M}} %
\def\cadlag{càdlàg\xspace}
\newcommand{\loc}{\mathrm{loc}}
\newcommand{\Fv}{\mathscr{V}} %
\newcommand{\var}{\mathrm{var}}
\newcommand{\HSe}{\mathscr{H}}
\newcommand{\HSehom}{\mathscr{H}}
\newcommand{\HSehomSym}{\widehat{\mathscr{H}}}
\newcommand{\homnqN}[1]{\Aabs{#1}_{\HSehom^{q,N}}}
\newcommand{\homnqNs}[1]{\Aabss{#1}_{\HSehom^{q,N}}}
\newcommand{\homns}[1]{\Aabss{#1}_{\HSehom^{1,N}}}
\begin{document}
\title[On expected signatures and signature cumulants in semimartingale models {\ \ }]{On expected signatures and signature cumulants in semimartingale models}
\author[P.~Friz]{Peter~K.~Friz$^{\dagger,\ddagger}$}
\author[Paul~P. Hager]{Paul~P.~Hager$^\ast$}
\author[N.~Tapia]{Nikolas Tapia$^{\ddagger}$}
\email{friz@math.tu-berlin.de}\email{paul.peter.hager@univie.ac.at}\email{tapia@wias-berlin.de}
\address{$^\dagger$Institut für Mathematik, TU Berlin, Str. des 17. Juni 136, 10586 Berlin, Germany.}
\address{$^\ddagger$Weierstrass Institute, Mohrenstr. 39, 10117 Berlin, Germany.}
\address{$^\ast$Department of Statistics and Operations Research, University of Vienna, Kolingasse 14-16, 1090 Wien, Austria.}
\subjclass[2020]{60L10, 60L90, 60E10, 60G44, 60G48, 60G51, 60J76}
\keywords{Signatures, L\'evy processes, universal signature relations for semimartingales, moment-cumulant relations, characteristic functions, diamond product, Magnus expansion}

\begin{abstract}
The signature transform, a Cartan type development, translates paths into high-dimensional feature vectors, capturing their intrinsic characteristics. Under natural conditions, the expectation of the signature determines the law of the signature, providing a statistical summary of the data distribution. This property facilitates robust modeling and inference in machine learning and stochastic processes. Building on previous work by the present authors [{\em Unified signature cumulants and generalized Magnus expansions, FoM Sigma '22}] we here revisit the actual computation of expected signatures, in a general semimartingale setting. Several new formulae are given. A log-transform of (expected) signatures leads to log-signatures (signature cumulants), offering a significant reduction in complexity.
\end{abstract}

\maketitle

\tableofcontents

\clearpage

\section{Introduction}

Signatures and expected signatures have become increasingly important in recent years, offering a top-down description of sequential data and its statistics, respectively.
To formalize this recall the tensor series over $\mathbb{R}^d$, denoted $\T \coloneq 
T\dparl\mathbb{R}^d\dparr$, defined as infinite direct product of tensor powers of $\mathbb{R}^d$, with elements of the form
\[
\bx = (\bx^{(0)}, \bx^{(1)}, \bx^{(2)}, \dotsc) \equiv \bx^{(0)} + \bx^{(1)} + \bx^{(2)} + \dotsb, 
\]
$\bx^{(k)} \in (\mathbb{R}^d)^{\otimes k}, k \in \N$, with both $(\mathbb{R}^d)^{\otimes 0} \cong \R$ and $(\mathbb{R}^d)^{\otimes 1} \cong \mathbb{R}^d$ embedded in $\T$. Tensor series can be multiplied, so that $\T$ carries a natural algebra structure, $\exp$ and $\log$ are then defined by their usual power series.
Given a smooth $\mathbb{R}^d$-valued path $X$, its {\em signature} on $[0,T]$,  is obtained from solving the universal linear differential equation
\begin{equation} \label{equ:SigSmooth}
     \dot \bS = \bS\,\dot X, \qquad \bS_0 = 1 \equiv (1,0,0,...) \in \T_1 	,
\end{equation}
which just rephrases the common definition in terms of iterated integrals. We then set $\Sig (X)_{0,T} := \bS_T$.
Here and below, write $\T_\lambda$ for series starting with (scalar) $\lambda$. 
It is known, essentially as consequence of the chain rule, that the log-signature $\log \bS_T$ takes values in $\mathcal{L} \equiv \mathrm{Lie} \dparl\mathbb{R}^d\dparr
(\subset \T_0)$, the (generically infinite) Lie series over $\R^d$.  cf. \cite{reutenauer2003free,friz2010multidimensional,friz2024rectifiable}.

Equation \eqref{equ:SigSmooth} remains meaningful when $X$ is a sufficiently smooth $\T_0$-valued path, including the important case of $\mathcal{L}$-valued path;  in which case $\bS$ is known as smooth geometric rough path, of recent importance in signature kernel methods \cite{bellingeri2022smooth,lemercier2024highordersolversignature}.

\medskip 
Now, many paths of interest arise as sample paths from stochastic processes, far from continuously differentiable, leave alone smooth. 
A natural class of stochastic processes, where \eqref{equ:SigSmooth} still makes sense - as (Stratonovich) stochastic differential equation - is given by semimartingales.
Let accordingly $\Sec(\R^d)$ denote the class of continuous, $d$-dimensional semimartingales on some filtered probability space $(\Omega, (\mathcal{F}_t)_{t \ge 0}, \mathbb{P})$.
We recall that this is the somewhat decisive class of stochastic processes that allows for a reasonable, stochastic integration theory, notably required by no-arbitrage based continuous-time finance.
Classic texts include \cite{revuz2004continuous,le2016brownian, jacod2003limit,protter2005stochastic}, as a concise introduction we recommend Chapter 1 of \cite{ait2014high}.
Similar to the above one considers, for $X \in \Sec(\R^d)$,
\begin{equation}
     \mathrm d \bS = \bS\,{\circ\mathrm d} X, \qquad \bS_0 = 1  \label{equ:gSig0},
\end{equation}
the solution of which is expressed in terms of iterated (Stratonovich) integrals. As before, \eqref{equ:gSig0} remains meaningful in greater generality, with $X$ replaced by a $\T_0$-valued semimartingale, say $\bX \in \Sec(\T_0)$. We then set $\Sig (\bX)_{0,T} (\omega) := \bS_T (\omega)$, a (random) element in $\T_1$. Since $\T_0$ (resp. $\T_1$) are simple examples of Lie algebras (resp. groups), the linear Stratonovich SDE \eqref{equ:gSig0} then falls into a classical setting \cite{mckean1969stochastic,hakim1986exponentielle}.%

Whenever the (generalized) signature $\Sig (\bX)_{0, T}$ is component-wise integrable, we define the {\em expected signature} and \emph{signature cumulants} by
\[
       \fmu (T) \coloneq  \E (\Sig (\bX)_{0, T})\in\TT_1, \quad \kap (T) \coloneq  \log \fmu (T) \in \tf.
\]

Already in the case 
where $\bX$ is deterministic and sufficiently regular, this leads to an interesting (ordinary differential) equation for $\kap$ with accompanying (Magnus) expansion, well understood as effective computational tool \cite{iserles2005lie,blanes2009magnus}. Lyons \cite{LyonsICM} is an excellent survey
with a variety of applications, ranging from machine learning to numerical algorithms on Wiener space known as cubature \cite{lyons2004cubature}; signature cumulants were named and first studied in their own right in \cite{bonnier2019signature}, providing, in particular, formulas for converting signature moments to signature cumulants and vice versa.

In the special case of $d=1$ and $\bX=(0,X,0,\dots)$ where $X$ is a scalar semimartingale, $\fmu (T)$ and $\kap (T)$ are nothing but the sequence of moments and cumulants of the real valued random variable $X_T-X_0$. When $d > 1$, signature moments / cumulants provide an effective way
to describe the process $X$ on $[0,T]$, see \cite{LJZ11, LyonsICM, chevyrev2016characteristic}. The question arises how to compute them. If one takes $\bX$ as $d$-dimensional Brownian motion, the signature cumulant $\kap(T)$ equals $(T/2) \I_d$, where $\I_d$ is the identity $2$-tensor over $\R^d$. This is known as {\em Fawcett's formula}, \cite{lyons2004cubature,friz2020course}. %
 
Exposing and building on \cite{FHT22}, loosely speaking, our main functional equations (\Cref{thm:main_esig} and \Cref{thm:main_sigcum}) are a vast generalization of Fawcett's formula.
 Projecting these equations to tensor levels, we obtain recursions for signature moments and signature cumulants respectively (\Cref{cor:recursion_mu} and \Cref{cor:recursion_h_form}).
 These recursions represent $\esig^{(n)}$ in terms of $\esig^{(1)}, \dots, \esig^{(n-1)}$ and $\kap^{(n)}$ in terms of $\kap^{(1)}, \dots, \kap^{(n-1)}$ using the underlying dynamic structure of the probability space.
 
 This chapter is organized as follows: Section~\ref{sec:preliminaries} introduces the mathematical preliminaries, including tensor algebra, continuous semimartingales taking values therein, generalized signatures, and their conditional expectations. In Section~\ref{sec:functional}, we derive functional equations for the expected signature and signature cumulants, transitioning from discrete-time processes to continuous semimartingales. Section~\ref{sec:reucsrion} develops the resulting recursive formulas for signature moments and cumulants and demonstrates how they relate to the so-called diamond products of semimartingales. In Section~\ref{sec:multivariate}, we examine the implications of the general functional equations for multivariate moments and cumulants by projecting to the symmetric algebra. In Section~\ref{sec:applications}, we apply the main results to time-inhomogeneous Lévy processes and Brownian rough paths. Finally,  Section~\ref{sec:conclusion} summarizes all main results and corollaries and gives an outlook to potential applications and directions for future research.

\textbf{Acknowledgment.}
PKF and NT acknowledge seed support from DFG CRC/TRR 388 “Rough Analysis, Stochastic Dynamics and Related Fields”, projects A02, A05 (PF) and B01 (NT). PKF is also supported by the DFG Excellence Cluster MATH+ and a MATH+ Distinguished Fellowship.

\section{Preliminaries}\label{sec:preliminaries}
\subsection{The tensor algebra and tensor series}\label{sec:tensor_series}
Denote by $T(\Rd)$ the tensor algebra over $\Rd$, i.e.
\begin{align*}
T(\Rd)\coloneq \bigoplus_{k=0}^\infty (\Rd)^{\otimes k},
\end{align*}
elements of which are {\it finite} sums (tensor polynomials) of the form
\begin{equation} \label{equ:bxform}
\bx = \sum_{k \ge 0} \bx^{(k)} %
= \sum_{w \in \W^d} \bx^w e_w
\end{equation}
with $\bx^{(k)} \in (\Rd)^{\otimes k}, \bx^w \in \R$ and linear basis vectors $e_w \coloneq e_{i_1}\dotsm e_{i_k}\in(\Rd)^{\otimes k}$ where $w$ ranges over all words $w=i_1\dotsm i_k\in\W_d$ over the alphabet $\{1,\dots,d\}$. Note $\bx^{(k)} =  \sum_{|w|=k} \bx^w e_w$ where $|w|$ denotes the length a word $w$. 
The element $e_\emptyset = 1 \in (\Rd)^{\otimes 0} \cong \R$ is the neutral element of the concatenation (tensor) product, which is obtained by linear extension of \(e_we_{w'}=e_{ww'}\) where $ww' \in \W_d$ denotes concatenation of two words.
We thus have, for $\bx,\by \in T(\Rd)$,
\begin{equation}
\label{eq:series.product}
\bx\by = \sum_{k \ge 0} \sum_{\ell =0}^k \bx^{(\ell)} \by^{(k-\ell)}  = \sum_{w \in \W^d} \left( \sum_{w_1w_2 = w} \bx^{w_1}\by^{w_2} \right) e_w \in T(\Rd).
\end{equation}
Note that we reserve the usual product symbol $\otimes$ for another product that will be introduced below. This will also prevent an overflow of product symbols throughout the text.

This extends naturally to {\em infinite} sums, i.e., tensor series, elements of the ``completed'' tensor algebra
\begin{align*}
  \TT \coloneq T\dparl\R^d\dparr\coloneq \prod_{k=0}^\infty (\Rd)^{\otimes k},
\end{align*}
which are written as in (\ref{equ:bxform}), but now as formal infinite sums with identical notation and multiplication rules; the resulting algebra $\TT$ obviously extends $T(\R^d)$.
Denote by $\TT_0$ and $\TT_1$ the subspaces of tensor series starting with $0$ and $1$ respectively;
that is, \(\bx \in\tf\) (resp. \(\TT_1\)) if and only if \(\bx^\emptyset=0\) (resp. \(\bx^\emptyset=1\)).
Restricted to $\tf$ and $\TT_1$ respectively, the exponential and logarithm in $\TT$, defined by the power series,
\begin{align*}
\exp\colon\tf \to \TT_1,& \quad \bx \mapsto \exp(\bx) \coloneq   1 + \sum_{k=1}^\infty \frac{1}{k!}\bx^k, \\
\log\colon\TT_1 \to \tf,& \quad 1 + \bx \mapsto \log( 1 + \bx) \coloneq \sum_{k=1}^\infty \frac{(-1)^{k+1}}{k}\bx^k,
\end{align*}
are globally defined and inverse to each other. 
We will usually abbreviate $e^{\bx} = \exp(\bx)$.
The vector space $\tf$ becomes a {\it Lie algebra} with the commutator bracket \([-,-]\colon\tf\otimes\tf\to\tf\) given by
$$\Lie{\bx}{\by} \coloneq \bx\by-\by\bx, \quad \bx, \by \in \tf.$$
Define the adjoint operator associated to a Lie-algebra element $\by \in \tf$ by
\begin{align*}
\ad{\by}\colon \tf \to \tf, \ \bx \mapsto \Lie{\by}{\bx}.
\end{align*}
The exponential image $\TT_1=\exp(\tf)$ is a Lie group, at least formally so. We refrain from equipping the infinite-dimensional $\TT_1$ with a differentiable structure, not necessary in view of the ``locally finite'' nature of the group law $(\bx,\by) \mapsto \bx \by$, by which we mean that the coefficients of the product \(\bx\by\) can be computed with a finite number of operations as is clear from \cref{eq:series.product}.

For two tensor series \(\bx,\by\) we define their \emph{outer tensor product}
\[
    \bx\otimes \by \coloneqq\sum_{n=0}^{\infty} \left(\sum_{i=0}^{n} \bx^{(i)}\otimes \by^{(n-i)}\right).
\]
We emphasize that here $\otimes$  does \emph{not} denote the (inner tensor) product in $\TT$, for which we did not reserve a symbol,  but it denotes another (outer) tensor product. In particular, this expression is not the same as \cref{eq:series.product}.

Given linear maps \(g,f\colon\TT\to\TT\) we define
\[
    (g\odot f)(\bx\otimes\by) = g(\bx)f(\by)\in\TT
\]
whenever \(\bx,\by\in\TT\), where the multiplication on the right-hand side is performed in \(\TT\) according to \cref{eq:series.product}.
It then extends by linearity to the linear span of outer tensors which we will denote by \(\TT\otimes\TT\).

For \(n\in\N\), the subspace \[ \ideal_n\coloneq\prod_{k=n+1}^\infty(\Rd)^{\otimes k} \] is a two sided
ideal of \(\TT\), meaning that \(\TT\ideal_n+\ideal_n\TT\subseteq\ideal_n\), which turns out to be the right condition under which the quotient space \(\TT/\ideal_n\) has a natural algebra structure.
We can identify \(\TT/\ideal_n\) with %
\[
    \TT^n \coloneq \bigoplus_{k=0}^n(\Rd)^{\otimes k},
\]
equipped with the truncated tensor product
$$
\bx\by = \sum_{k=0}^{n} \sum_{\ell_1 + \ell_2=k} \bx^{(\ell_1)} \by^{(\ell_2)}  = \sum_{w \in \W^d,|w|\le n}
\left( \sum_{w_1w_2 = w} \bx^{w_1}\by^{w_2} \right) e_w \in \TT^n,
$$
which formally is the same as \cref{eq:series.product} after setting \(\bx^{(k)}=0\) for all \(k>n\) as the above definition suggests.
We denote the canonical \emph{projection map} by \(\pi_{(0,n)}\colon\TT\to\TT^n\).
The usual power series in $\TT^{n}$ define $\exp_n\colon
\TT^n_0 \to \TT^n_1$ with inverse $\log_n\colon \TT^n_1 \to \TT^n_0$, and we may again abuse notation and write $\exp$ and $\log$ when no confusion arises. 
As before, $\TT^n_0$ has a natural Lie algebra structure, and $\TT^n_1$ (now finite dimensional) is
a \emph{bona fide} Lie group.

We equip $T(\R^d)$ with the norm
\begin{align*}
    |a|_{T(\R^d)} \coloneq  \max_{k\in\N}|a^{(k)}|_{(\Rd)^{\otimes k}},
\end{align*}
where $|\cdot|_{(\Rd)^{\otimes k}}$ is the euclidean norm on $(\Rd)^{\otimes k}\cong\R^{d^k}$, which makes it a Banach space.
The same norm makes sense in \(\TT^n\), and since the definition is consistent in the sense that $|a|_{\TT^k} = |a|_{\TT^n}$ for any $a \in \TT^{n}$ and $k \ge n$, and $|a|_{\TT^n} = |a|_{(\Rd)^{\otimes n}}$ for any $a \in (\Rd)^{\otimes n}$. We will drop the index whenever it is possible and write simply $|a|$.
For a word $w \in \W_d$ with $|w|>0$ we define the directional derivative for a function $f\colon \TT \to \R$
by \((\partial_w f)(\bx)\coloneq\partial_t f(\bx + t e_w)\big\vert_{t=0},\) whenever the derivative exists.
In particular, for the exponential function we have
\begin{align*}
  \partial_w\exp(\bx) &= G(\ad\bx)(e_w)\exp(\bx)\\
  \partial_{w'}\partial_w\exp(\bx) &= Q(\ad\bx)(e_w\otimes e_{w'})\exp(\bx),
\end{align*}
where the operators \(G(\ad\bx)\colon\TT_0\to\TT_0\) and \(Q(\ad\bx)\colon\TT_0\otimes\TT_0\to\TT_0\) are defined by the power series
\begin{equation}\label{eq:GQ_def}
  \begin{split}
    G(\ad\bx)(\by) &= \frac{e^{\ad\bx}-1}{\ad\bx}(\by) = \sum_{k=0}^{\infty}
    \frac{1}{(k+1)!}(\ad\bx)^k(\by)\\
                    &= \by+\frac12[\bx,\by] + \frac16 [\bx,[\bx,\by]]+\dotsb\\
    Q(\ad\bx)(\by\otimes\by') &=
    2\sum_{n,m=0}^{\infty}\frac{(\ad\bx)^n\odot(\ad\bx)^m}{(n+1)!m!(n+m+2)}(\by\otimes\by')\\
                               &= \by\by' + \frac13[\bx,\by]\by' + \frac13\by[\bx,\by'] + \frac16[\bx,\by][\bx,\by']+\dotsb
  \end{split}
\end{equation}

The first identity is well known and can be traced back to Schur \cite{schur1891theorie}.
With this identities one also obtains the following formal Taylor expansion of the exponential map:
\begin{equation}\label{eq:exp.taylor}
  \exp(\bx)\exp(-\by)-1 = G(\ad\bx)(\bx-\by) + Q(\ad\bx)((\bx-\by)\otimes(\bx-\by))+\dotsb
\end{equation}
where we recall that the symbol \(\otimes\) is meant as an \emph{outer} tensor product.
Naturally, similar formulas hold in the truncated tensor algebra where \(\exp\) is replaced by \(\exp_n\).

\subsection{Semimartingales}\label{sec:tensor_semimartingales}
Let $\D$ be the space of continuous adapted processes $X\colon\Omega \times [0,T) \to \mathbb{R}$, with $T\in(0,\infty]$, defined on some filtered probability space $(\Omega, (\F_t)_{0 \le t \le T}, \mathbb{P})$.
The space of {\em continuous semimartingales} $\Sec$ is given by the processes $X\in\D$ that can uniquely be decomposed as
$$
     X_{t}=X_0+M_{t}+A_{t},
$$
where $M \in \Ma_\loc$ is a continuous local martingale, and $A\in\Fv$ is an adapted process of locally bounded variation, both started at zero.
The (predictable) quadratic variation process of $X$ is denoted by \(\langle X\rangle_t\).
Covariation angle brackets $\CV{X}{Y}$, for another real-valued semimartingale $Y$, are defined by polarization.
For $q \in [1, \infty)$, write $\Lcal^{q} = L^{q}(\Omega, \F, \PM)$, then a Banach space $\HSe^{q} \subset \Sec$ is given by those $X\in\Sec$ with $X_0 = 0$ and
$$
 \| X \|_{\HSe^q} \coloneq \bigg\Vert \QV{M}^{1 / 2}_{T} + \int_0^{T} |\dd A_s | \bigg\Vert_{\Lcal^{q}}< \infty.
$$
For a process $X\in\D$ we define
\begin{align*}
\Vert{X}\Vert_{\Sesup^q} \coloneq \Big\Vert{\sup_{0 \le t \le T}\abs{X_t}}\Big\Vert_{\Lcal^{q}}
\end{align*}
and define the space $\Se^q\subset\Sec$ of semimartingales $X\in\Sec$ such that $\Vert{X}\Vert_{\Sesup^q}< \infty$.
Note that there exists a constant $c_q>0$ depending on $q$ such that (see \cite[Ch. V, Theorem 2]{protter2005stochastic})
\begin{align}\label{eq:Sesup_vs_HSe}
\Vert{X}\Vert_{\Sesup^q} \le c_q  \| X \|_{\HSe^q}.
\end{align}

We view $d$-dimensional semimartingales, $X= \sum_{i=1}^d X^i e_i \in \Sec(\R^d)$, as special cases of tensor series valued semimartingales $\Sec(\T)$ of the form
$$\bX = \sum_{w \in \W_d} \bX^w e_w$$
with each component $\bX^w$ a real-valued continuous semimartingale.
This extends mutatis mutandis to the spaces of $\TT$-valued adapted continuous processes $\D(\TT)$, martingales $\Ma(\TT)$ and processes of finite variation $\Fv(\TT)$.
Note also that we typically deal with $\TT_0$-valued semimartingales $\bX$ which amounts to have only words with length $|w| \ge 1$.
Standard notions such as continuous local martingale $\bX^{c}$ and jump process $\Delta \bX_t = \bX_t - \bX_{t^-}$ are defined componentwise.
To weaken integrability requirements later, we often deal semimartingales taking values in the truncated tensor algebra $\TT^N$ and will omit specifically extending notations when it is straightforward.

{\bf Brackets}: Now let $\bX$ and $\bY$ be $\TT$-valued semimartingales.
We define the (non-commutative) \emph{outer quadratic covariation bracket} of $\bX$ and $\bY$ by
\[
\outerbracket{\bX}{\bY}_t
\coloneq \sum_{w_1, w_2\in\W_d}\langle\bX^{w_1}, \bY^{w_2}\rangle_te_{w_1} \otimes e_{w_2}\in\TT\otimes\TT.
\]
Similarly, define the (non-commutative) {\em inner quadratic covariation bracket} by %
\[
\CV{\bX^{}}{\bY^{}}_t \coloneq \sum_{w\in\W_d%
}\left(\sum_{w_1 w_2 = w}
   \innerbracket{\bX^{w_1}}{\bY^{w_2}}_t \right)e_w \in \TT.
\]
As usual, we may write $\cvbracket{\bX} \equiv \outerbracket{\bX}{\bX}$ and $\QV{\bX} \equiv   \CV{\bX}{\bX}$.

{\bf $\HSe$-spaces}: The definition of $\HSe^{q}$-norm naturally extends to tensor valued martingales.
More precisely, for $\bX^{(n)} \in \Sec((\Rd)^{\otimes n})$ with $n\in\None$ and $q\in[1,\infty)$ we define
\begin{equation*}
\Abss{\bX^{(n)}}_{\HSe^{q}} \coloneq \Abss{\bX^{(n)}}_{\HSe^{q}((\Rd)^{\otimes n})} \coloneq
\Abs{\abs{\QV{\bM}}_T^{1/2} + \abs{\bA}_{1-\mathrm{var};[0;T]}}_{\Lcal^{q}},
\end{equation*}
where $\bX^{(n)} = \bM + \bA$ with $\bM\in\Ma_\loc((\Rd)^{\otimes n})$ and $\bA\in\Fv((\Rd)^{\otimes n})$, with
\begin{equation*}
\abs{\bA}_{1-\var;[0;T]}\coloneq\sup_{0 \le t_1 \le \dotsb \le t_k \le T}
\sum_{t_i}\abs{\bA_{t_{i+1}} - \bA_{t_{i}}} \le \sum_{w\in\W_d, |w|=n} \int_0^{T}\abs{\dd A^{w}_s},
\end{equation*}
with the supremum taken over all partitions of the interval $[0,T]$.
One may readily check that
\begin{equation*}
\Abss{\bX^{(n)}}_{\HSe^{q}} \le \sum_{w\in\W_d, |w|=n} \Abs{\bX^{w}}_{\HSe^{q}}
\end{equation*}
and for $\bX^{(n)}\in\Ma_\loc((\Rd)^{\otimes n})$:
\begin{equation*}
\Abss{\bX^{(n)}}_{\HSe^{q}} = \Abs{\abss{\QVsmall{\bX^{(n)}}}^{1/2}_T}_{\Lcal^{q}}.
\end{equation*}
Further define the following subspace $\HSehom^{q,N} \subset \Sec(\tf^{N})$ of homogeneously $q$-integrable semimartingales
\begin{equation*}
\HSehom^{q,N} \coloneq\left\{ \bX \in \Sec(\tf^{N}) \;\Big\vert\; \bX_0 = 0,\; \homnqN{\bX} < \infty \right\},
\end{equation*}
where for any $\bX \in \Sec(\TT^{N})$ we define
\begin{equation*}
\homnqN{\bX} \coloneq \sum_{n=1}^{N} \big(\Abss{\bX^{(n)}}_{\HSe^{qN/n}}\big)^{1/n}.
\end{equation*}
Note that $\homnqNs{\cdot}$ is sub-additive and positive definite on $\HSehom^{q, N}$ and it is \emph{homogeneous} under dilation in the sense that
\begin{equation*}
\homnqN{\delta_{\lambda}\bX} = \abs{\lambda}\,\homnqN{\bX}, \quad \delta_\lambda \bX \coloneq
(\bX^{(0)}, \lambda \bX^{(1)}, \dotsc, \lambda^{N}\bX^{(N)}), \quad \lambda \in \R.
\end{equation*}
We also introduce the following subspace of $\Sec(\TT)$
\begin{equation*}
\HSe^{\infty-}(\TT) \coloneq \left\{ \bX \in \Sec(\TT):\; \bX^w\in\HSe^q, \;\forall\, 1 \le q < \infty,\;w\in\W_d\right\}.
\end{equation*}
Note that if $\bX\in\Sec(\TT)$ such that $\homns{\bX^{(0,N)}} < \infty$ for all $N\in\None$ then it also holds $\bX\in\HSe^{\infty-}(\TT)$.

{\bf Stochastic integrals}: We are now going to introduce a notation for the stochastic integration with respect to tensor valued semimartingales.
Denote by $\mathcal{L}(\TT; \TT) = \{ f: \TT \to \TT\;|\; f \text{ is linear}\}$ the space of endomorphisms on $\TT$
and let $$\bF: \Omega \times [0,T] \to \mathcal{L}(\TT; \TT), \quad (t, \omega) \mapsto \bF_t(\omega; \cdot)$$ such that
\begin{align}\label{cond:tensor_integrator_one}
&(\bF_t(\bx))_{0 \le t \le T} \in \D(\TT), \quad \text{for all } \bx\in\TT\\\label{cond:tensor_integrator_two}
\text{and}\quad &\bF_t(\omega; \ideal_n) \subset \ideal_n,  \quad \text{for all } n\in\N, \; (\omega, t)\in\Omega\times[0,T],
\end{align}
where $\ideal_n\subset\TT$ was introduced in \Cref{sec:tensor_series}, consisting of series with tensors of level $n+1$ and higher.
In this case, we can define the stochastic Itô integral of $\bF$ with respect to $\bX\in\Sec(\TT)$ by
\begin{align}\label{eq:definition_tensor_sotch_integral}
\int_{(0, \cdot]} \mathbf F_{t}\,\dd\bX_t :=\sum_{w\in\W_d}\;\sum_{v\in\W_d,\,\abs{v} \le \abs{w}}\; \int_{(0, \cdot]}  \mathbf  F_{t}(e_v)^{w}\dd\bX_t^{v} e_w \in \Sec(\TT).
\end{align}
For example, let $\bY, \bZ \in \D(\TT)$ and define $\bF := \bY\,\Id\,\bZ$, i.e. $\bF_t(\bx) = \bY_t \, \bx \, \bZ_t$, the concatenation product from the left and right, for all $\bx\in\TT$.
Then we see that $\bF$ indeed satisfies the conditions \eqref{cond:tensor_integrator_one} and \eqref{cond:tensor_integrator_two} and we have
\begin{equation}\label{eq:def_tensor_ito_integral}
\int_{(0, \cdot]} (\bY_{t}\,\Id\,\bZ_{t})(\dd \bX_t)= \int_{(0, \cdot]} \bY_{t}\dd \bX_t \bZ_{t}= \sum_{w\in\W_d}\left( \sum_{w_1 w_2w_3 = w} \int_{(0, \cdot]}  \mathbf  Z^{w_1}_{t} \bY^{w_3}_{t}\,\mathrm d \bX_t^{w_2} \right)e_w.
\end{equation}
Another important example is given by $\bF = (\ad \bY)^{k} \coloneq \ad\bY \circ \cdots \circ \ad\bY$ ($k$-times) for any $\bY\in\D(\tf)$ and $k\in\N$.
Indeed, we immediately see $\bF$ satisfies the condition \eqref{cond:tensor_integrator_two} and since the iteration of adjoint operations can be expanded in terms of left- and right-multiplication, we also see that $\bF$ satisfies \eqref{cond:tensor_integrator_one}.
More generally, integrals with respect to power series of adjoint operators are well defined.
Indeed, introducing for $\ell \in (\mathbb{N}_{\ge 1})^k$  the notation $\ell = (l_1, \dots, l_k)$, $|\ell|:=k$ and $||\ell||:=l_1 + \dotsb + l_k$ and $(\ad \bx^{(\ell)}) = (\ad \bx^{(l_{1})}  \cdots  \ad \bx^{(l_k)})$ for any  $\bx \in \tf$.
Then for any $(a_k)_{k=0}^{\infty}\subset\R$ and  $\bX\in\Sec(\tf)$ the integral
\begin{align}\label{eq:power_series_integral}
\int_{(0, \cdot]}\left[ \sum_{k=0}^{\infty} a_k (\ad\bY_{t})^{k} \right](\dd\bX_t) = \sum_{n= 1}^{\infty}\sum_{k=0}^{n-1}\; \sum_{\Abs{\ell}=n,\, \abs{\ell} = k+1}a_k \int_{(0,\cdot]}
\ad\bY^{(\ell)}_{t}(\dd\bX^{(l_1)}_t)
\end{align}
is well defined in the sense of \eqref{eq:def_tensor_ito_integral}.
The definition of the integral with integrands of the form $$\bF: \Omega \times [0,T] \to
\mathcal{L}(\TT\otimes\TT; \TT)$$ with respect to processes $\bX \in \Sec(\TT\otimes\TT)$ is completely analogous.

\subsection{Generalized signatures}
\label{sec:gensig}
We now 
give precise meaning to %
\(\mathrm d\bS=\bS\,{\circ\mathrm
d}\bX\), or component-wise, for every word \(w\in\W_d\),
\[
    \mathrm d\bS^w=\sum_{w_1w_2=w}\bS^{w_1}\,{\circ\mathrm d}\bX^{w_2},
\]
where the driving noise \(\bX\) is a \(\tf\)-valued continuous semimartingale, i.e. $\bX \in\Sec(\tf)$.
The Itô integral meaning of this equation, when started at time $s$ from $\bs \in \T_1$, for times $t \ge s$, is given by
\begin{equation}\label{eq:stratonovich_sig}
\bS_t = \bs + \int_{(s,t]} \bS_{u}\,\mathrm d \bX_u + \frac{1}{2}\int_s^{t} \bS_{u}\,\mathrm d\QV{\bX}_u,
\end{equation}
leaving the component-wise version to the reader. We have
\begin{proposition}
    Suppose \(\bX\in\Sec(\tf)\). For every $s \in [0,T]$ and $\bs \in \T_1$, equation \eqref{eq:stratonovich_sig} has a
    unique global solution on \(\TT_1\) starting from $\bS_s=\bs$.
    \label{prop:sigsol}
\end{proposition}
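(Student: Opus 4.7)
The plan is to exploit the graded (triangular) structure of equation \eqref{eq:stratonovich_sig} and solve it level by level. The crucial point is that $\bX\in\Sec(\tf)$ means $\bX^{(0)}\equiv 0$, so that projecting onto the $n$-th tensor level yields
\[
\bS_t^{(n)} = \bs^{(n)} + \sum_{k=0}^{n-1}\int_{(s,t]} \bS_u^{(k)}\,\mathrm d\bX_u^{(n-k)} + \frac{1}{2}\sum_{k=0}^{n-2}\int_s^t \bS_u^{(k)}\,\mathrm d\QV{\bX}_u^{(n-k)},
\]
where the absence of $\bS_u^{(n)}$ on the right-hand side is precisely because $\bX^{(0)}=0$ (and hence also $\QV{\bX}^{(0)}=\QV{\bX}^{(1)}=0$). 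Thus each level's equation is driven only by lower-level components of $\bS$ against the (known) semimartingales $\bX^{(j)}$ and $\QV{\bX}^{(j)}$, which lie in the finite-dimensional spaces $(\Rd)^{\otimes j}$.

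First I would handle the base case $n=0$: the identity above degenerates to $\bS^{(0)}_t = \bs^{(0)} = 1$, using $\bs\in\TT_1$. This immediately shows that if a solution exists, it automatically stays in $\TT_1$, addressing the $\TT_1$-valuedness claim without any further work. Next, I would proceed by induction on $n\ge 1$: assuming $\bS^{(0)},\dotsc,\bS^{(n-1)}$ have been constructed as continuous adapted semimartingales with values in the respective finite-dimensional tensor powers, the right-hand side of the displayed equation is a sum of classical It\^o integrals of continuous adapted integrands against continuous semimartingales plus Riemann--Stieltjes integrals against the finite-variation process $\QV{\bX}$. This is a finite, explicit formula (not a fixed-point problem), so it uniquely defines a continuous semimartingale $\bS_t^{(n)}$; in particular, existence and uniqueness at level $n$ are immediate.

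Assembling the levels gives a process $\bS = (\bS^{(n)})_{n\ge 0}\in\D(\TT)$ which, by construction, solves \eqref{eq:stratonovich_sig} in the componentwise sense of \Cref{sec:tensor_semimartingales}, and which is unique because uniqueness holds level by level. Globality in time is automatic since at each level we are just forming explicit integrals, which exist on all of $[s,T]$.

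The main (minor) obstacle is bookkeeping: one has to verify that the level-$n$ projection of the formal products $\bS_u\,\mathrm d\bX_u$ and $\bS_u\,\mathrm d\QV{\bX}_u$ indeed matches the componentwise definition of the tensor-valued stochastic integral in \eqref{eq:definition_tensor_sotch_integral}, and that the resulting finite sum of integrals over $(\Rd)^{\otimes k}$ is well defined. Both are routine once one notes that $\bY\mapsto\bY\,\Id$ satisfies the stability conditions \eqref{cond:tensor_integrator_one}--\eqref{cond:tensor_integrator_two}. No fixed-point argument or global integrability is required, precisely because the non-zero scalar component is absent from the driver $\bX$.
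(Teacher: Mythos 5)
Your proof is correct and follows essentially the same route as the paper: both exploit the grading induced by $\bX^{(0)}\equiv 0$ (hence $\QV{\bX}^{(0)}=\QV{\bX}^{(1)}=0$) to see that each level $\bS^{(n)}$ is given by an explicit, finite combination of integrals of already-constructed lower levels, so existence, uniqueness and globality hold level by level. The only cosmetic difference is that the paper first reduces to $\bs=1$ via left-multiplication by $\bs^{-1}$ and then displays the first few iterated integrals, whereas you keep general $\bs$ and write the general level-$n$ projection; both are fine.
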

\begin{proof}
   Note that $\bS$ solves \eqref{eq:stratonovich_sig} iff $\bs^{-1} \bS$
   solves the same equation started from $1 \in \T_1$. We may thus take $\bs = 1$ without loss of
   generality. The graded structure of our problem, and more precisely that $\bX = (0,X,\mathbb{X},\dots)$
    in \eqref{eq:stratonovich_sig} has no scalar component, shows that the (necessarily)
    unique solution is given explicitly by iterated integration, as may be seen explicitly when writing out
    $\bS^{(0)} \equiv 1$, $\bS^{(1)}_t = \int_s^t \mathrm d X = X_{s,t} \in \R^d$,
      $$
      \bS^{(2)}_t = \int_{(s,t]} \bS^{(1)}_{u}\,\mathrm d X_u +\mathbb{X}_{t} -\mathbb{X}_{s} + \frac{1}{2} \QV{X}_{s,t} \in (\R^d)^{\otimes 2},
   $$
   and so on. (In particular, we do not need to rely on abstract existence, uniqueness results for SDEs \cite{kurtz1995strtonovich} or Lie group stochastic exponentials \cite{hakim1986exponentielle}.)
   \end{proof}

\begin{definition} \label{def22}
    For \(\bX\in\Sec(\tf)\) and $s\in [0,T]$ we define
    $ \Sig (\bX \vert_{[s,\cdot]}) \equiv \Sig(\bX)_{s,\cdot}$ as the unique solution to \eqref{eq:stratonovich_sig} such
    that \(\Sig(\bX)_{s,s}=1\). We call $\Sig(\bX)_{s,t}$ the generalized signature (short: signature) of $\bX$ on $[s,t]$.
\end{definition}
\begin{remark} In rough path theory, one defines signatures for geometric rough paths. This is consistent with Definition \ref{def22} when $\bX \in \Sec(\mathcal{L})\) (recall $\mathcal{L} = \mathrm{Lie} \dparl\mathbb{R}^d\dparr$), including the special case of a continuous semimartingale with values in $\R^d$.
\end{remark}

The following can be seen as a (generalized) Chen relation.

\begin{lemma} \label{lem:Chen}
Let $\bX\in\Sec(\tf)$ and $0 \le s \le t \le u \le T$. Then the following identity holds with probability one, for all such $s,t,u$,
        \begin{equation}\label{eq:chen_identity}
         \Sig(\bX)_{s,t}\Sig(\bX)_{t,u}=\Sig(\bX)_{s,u}.
         \end{equation}
\end{lemma}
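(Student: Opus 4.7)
My plan is to fix $s\le t$, define the process
\[
   \bZ_u\coloneqq\Sig(\bX)_{s,t}\,\Sig(\bX)_{t,u},\qquad u\in[t,T],
\]
and show that $\bZ$ satisfies the same Itô SDE \eqref{eq:stratonovich_sig} as $\Sig(\bX)_{s,\cdot}$ on $[t,T]$ with identical initial datum at time $t$, so that uniqueness gives $\bZ_u=\Sig(\bX)_{s,u}$ and hence \eqref{eq:chen_identity}.

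The first step is the differential computation. Since $\Sig(\bX)_{s,t}$ is $\F_t$-measurable and does not depend on the running variable $u\ge t$, the linearity of left-multiplication by a fixed tensor and the definition \eqref{eq:stratonovich_sig} of $\Sig(\bX)_{t,\cdot}$ give, for $u\in[t,T]$,
\[
   \bZ_u=\Sig(\bX)_{s,t}+\int_{(t,u]}\Sig(\bX)_{s,t}\,\Sig(\bX)_{t,v}\,\dd\bX_v+\frac12\int_t^u\Sig(\bX)_{s,t}\,\Sig(\bX)_{t,v}\,\dd\QV{\bX}_v,
\]
which rewrites as $\bZ_u=\bZ_t+\int_{(t,u]}\bZ_v\,\dd\bX_v+\tfrac12\int_t^u\bZ_v\,\dd\QV{\bX}_v$, with $\bZ_t=\Sig(\bX)_{s,t}$. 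On the other hand, by the very definition of the signature, the restriction of $u\mapsto\Sig(\bX)_{s,u}$ to $[t,T]$ satisfies exactly the same equation with the same initial value $\Sig(\bX)_{s,t}$ at time $t$.

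Next I would invoke uniqueness. \Cref{prop:sigsol} was stated for deterministic starting points $\bs\in\TT_1$, but its proof proceeds by solving for each tensor level $\bS^{(n)}$ recursively through iterated stochastic integrals, using only that $\bX\in\Sec(\tf)$ has no scalar component. This recursion is insensitive to whether the initial condition is deterministic or $\F_t$-measurable, and so yields uniqueness also for random initial data in $\TT_1$. Applied to our two processes started at $\Sig(\bX)_{s,t}$ at time $t$, this forces $\bZ_u=\Sig(\bX)_{s,u}$ almost surely for each fixed $u\in[t,T]$, hence \eqref{eq:chen_identity} for each fixed triple $0\le s\le t\le u\le T$.

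The one subtle point, and the step I expect to require a touch of care, is the uniform statement ``with probability one, for all $s,t,u$''. I would handle this by noting that on each tensor level the map $(s,t,u)\mapsto \Sig(\bX)_{s,t}\Sig(\bX)_{t,u}-\Sig(\bX)_{s,u}$ admits a jointly continuous modification (the relevant iterated Itô integrals depend continuously on their endpoints in $\Se^2$, say), so that the null set on which \eqref{eq:chen_identity} may fail can be chosen as the union of the countably many null sets obtained for rational $(s,t,u)$ and then extended by continuity to all real triples with $s\le t\le u$.
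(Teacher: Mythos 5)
Your proposal is correct and is essentially the paper's own argument, just spelled out in more detail: the paper phrases it as the flow property $\Phi_{u\leftarrow t}\circ\Phi_{t\leftarrow s}=\Phi_{u\leftarrow s}$ of the solution map to \eqref{eq:stratonovich_sig} combined with the observation $\Phi_{t\leftarrow s}\bs=\bs\,\Sig(\bX)_{s,t}$, which is exactly your verification that $\bZ_u=\Sig(\bX)_{s,t}\Sig(\bX)_{t,u}$ solves the same equation as $\Sig(\bX)_{s,\cdot}$ from the same initial datum, followed by uniqueness.
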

\begin{proof} Call $\Phi_{t \leftarrow s} \bs \coloneq \bS_t$ the solution to \eqref{eq:stratonovich_sig} at time $t \ge s$, started from $\bS_s = \bs$. By uniqueness of the solution flow, we have
$
             \Phi_{u \leftarrow t} \circ \Phi_{t \leftarrow s} = \Phi_{u \leftarrow s} .
$
It now suffices to remark that, thanks to the multiplicative structure of \eqref{eq:stratonovich_sig} we have
$               \Phi_{t \leftarrow s} \bs = \bs  \Sig(\bX)_{s,t}$.
\end{proof}

As usual, when $\bX \in \Sec(\TT^{N}_0)$ for some $N\ge 1$ the development \eqref{eq:stratonovich_sig} is understood within the truncated tensor algebra $\TT^{N}_1$ and therefore $\Sig(\bX)_{s,t}$ is an element in $\TT^N_1$.

\subsection{Expected signatures and signature cumulants}
Throughout this section let $\bX \in \Sec(\tf)$  be defined on a filtered probability space $(\Omega, \F, (\F_t)_{0 \le t \le T}, \PM)$ satisfying the usual conditions and with the property that every martingale has a continuous version.
Recall that $\E_t$ denotes the conditional expectation with respect to the sigma algebra $\F_t$.
When $\E(|\Sig(\bX)^w_{0,t}|)<\infty$ for all $0 \le t \le T$ and all words $w\in\W_d$, then the \emph{(conditional) expected signature}
\begin{equation*}
\esig_t(T) \coloneq \E_t\left(\Sig(\bX)_{t,T}\right) = \sum_{w\in\W_d}\E_t(\Sig(\bX)^w_{t,T})e_w \in \TT_1, \quad 0 \le t \le T,
\end{equation*}
is well defined.
In this case, we can also define the \emph{(conditional) signature cumulant} of $\bX$ by
\begin{align*}
	\kapT{t}\coloneq\log\left(\esig_t(T)\right) \in \tf, \quad 0 \le t \le T.
\end{align*}
An important observation is the following
\begin{lemma}
Given $\E(|\Sig(\bX)^w_{0,t}|)<\infty$ for all $0 \le t \le T$ and words $w\in\W_d$, then $\esig(T) \in \Sec(\TT_1)$ and $\kap(T)\in\Sec(\tf)$.
\end{lemma}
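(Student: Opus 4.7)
The strategy is to rewrite $\esig_t(T)$ using Chen's relation so that the dependence on $t$ splits into an $\F_t$-measurable factor and a martingale factor, and then to exploit the graded (``locally finite'') nature of $\TT$ to reduce everything to finitely many real-valued semimartingale manipulations.

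First, for each word $w\in\W_d$, the integrability assumption $\E(|\Sig(\bX)^w_{0,T}|)<\infty$ ensures that $M^w_t\coloneq\E_t[\Sig(\bX)^w_{0,T}]$ is a uniformly integrable martingale on $[0,T]$; by the running assumption that every martingale admits a continuous version, we take $M\in\Ma(\TT_1)\subset \Sec(\TT_1)$. Next, by the Chen relation (\Cref{lem:Chen}) and the $\F_t$-measurability of $\Sig(\bX)_{0,t}$, one has
\[
\esig_t(T) = \E_t[\Sig(\bX)_{t,T}] = \E_t\bigl[\Sig(\bX)_{0,t}^{-1}\Sig(\bX)_{0,T}\bigr] = \Sig(\bX)_{0,t}^{-1}\, M_t.
\]
Now \(\Sig(\bX)_{0,\cdot}\in\Sec(\TT_1)\) by \Cref{prop:sigsol}, and its inverse is again in \(\Sec(\TT_1)\): writing \(\Sig(\bX)_{0,t}=1+\by_t\) with \(\by_t\in\tf\), one has \((1+\by_t)^{-1}=\sum_{k\ge 0}(-\by_t)^k\), and the projection to tensor level \(n\) only involves the finitely many components \(\by_t^{(1)},\dotsc,\by_t^{(n)}\), so each level is a polynomial in finitely many real semimartingales and hence itself a (componentwise) semimartingale. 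Therefore $\esig(T)$ is a product of two $\TT_1$-valued semimartingales, and each word-component is a finite sum of products of real semimartingales, so $\esig(T)\in\Sec(\TT_1)$.

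For the cumulant, write $\esig(T)=1+\bxi$ with $\bxi\in\Sec(\tf)$ by what we just proved. Then
\[
\kap(T)=\log(1+\bxi)=\sum_{k\ge 1}\tfrac{(-1)^{k+1}}{k}\bxi^k,
\]
and projecting to level $n$ only the terms with $k\le n$ contribute, involving only the components $\bxi^{(1)},\dots,\bxi^{(n)}$. Hence $\pi_n\kap(T)$ is a polynomial in finitely many real-valued semimartingales, which is again a semimartingale by It\^o's formula. This gives $\kap(T)\in\Sec(\tf)$.

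The main obstacle is conceptual rather than technical: one must recognise that, despite $\TT_1$ being infinite-dimensional and lacking an a priori differentiable structure, the graded algebra structure makes every operation we need (product, inverse, logarithm) \emph{locally finite} at each tensor level, so the preservation of the semimartingale property is really a finite-dimensional statement applied levelwise. Beyond that, the only probabilistic input is the existence of a continuous martingale version of $\E_\cdot[\Sig(\bX)_{0,T}]$, which is built into our standing assumptions on the filtration.
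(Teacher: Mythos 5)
Your proposal is correct and follows essentially the same route as the paper: factor $\esig_t(T)=\Sig(\bX)_{0,t}^{-1}\,\E_t[\Sig(\bX)_{0,T}]$ via the Chen relation, observe that inversion, multiplication and the logarithm are all locally finite at each tensor level, and conclude by the It\^o product rule componentwise. Your explicit treatment of the inverse via the geometric series $(1+\by)^{-1}=\sum_{k\ge0}(-\by)^k$ is a minor (and perfectly sound) elaboration of what the paper writes in closed form.
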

\begin{proof}
It follows from the relation \eqref{eq:chen_identity} that
\begin{align*}
\esig_t(T) = \E_t\left(\Sig(\bX)_{t,T}\right) = \E_t\left(\Sig(\bX)_{0,t}^{-1}\Sig(\bX)_{0,T}\right) = \Sig(\bX)_{0,t}^{-1}\E_t\left(\Sig(\bX)_{0,T}\right).
\end{align*}
Therefore projecting to the tensor components we have
\begin{align*}
  \esig_t(T)^w = \sum_{w_1w_2 = w}(-1)^{|w_1|}\Sig(\bX)^{\overline{w_1}}_{0,t}\E_t\left(\Sig(\bX)^{w_2}_{0,T}\right), \quad 0 \le t \le T, \quad w \in \W_d.
\end{align*}
where \(\overline{w_1}\) denotes word reversal.
Since $(\Sig(\bX)^w_{0, t})_{0 \le t \le T}$ and $(\E_t(\Sig(\bX)^w_{0,T})_{0 \le t \le T}$ are semimartingales (the latter in fact a martingale), it follows from Itô's product rule that $\esig^w(T)$ is also a semimartingale for all words $w\in\W_d$, hence $\esig(T)\in\Sec(\TT_1)$.
Further, recall that $\kap(T) = \log(\esig(T))$ and therefore it follows from the definition of the logarithm on $\TT_1$ that each component $\kap(T)^w$ with $w\in\W_d$ is a polynomial of $(\esig(T)^{v})_{v\in\W_d, |v|\le|w|}$. Hence it follows again by Itô's product rule that $\kap(T)\in\Sec(\tf)$.
\end{proof}
It is of strong interest to have a more explicit necessary condition for the existence of the expected signature.
The following theorem below, the proof of which can be found in \cite[Section~7.2]{FHT22}, yields such a criterion.
\begin{theorem}\label{thm:bdg_signature}
Let $q\in[1, \infty)$ and $N\in\None$, then there exist two constants $c,C>0$ depending only on $d$, $N$ and $q$, such that for all $\bX = \bX^{(0,N)} \in \HSehom^{q,N}$
\begin{equation*}
  c\homnqNs{\bX} \le \homnqNs{\Sig(\bX)_{0,\cdot}} \le C\homnqNs{\bX}.
\end{equation*}
In particular, if $\bX\in\HSe^{\infty-}(\tf)$ then $\Sig(\bX)_{0,\cdot}\in \HSe^{\infty-}(\TT_1)$ and the expected signature exists.
\end{theorem}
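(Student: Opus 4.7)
The plan is a level-by-level induction on $n\in\{1,\dots,N\}$, using the Itô-form recursion for the $n$-th tensor component of $\bS=\Sig(\bX)_{0,\cdot}$. Projecting \eqref{eq:stratonovich_sig} onto level $n$ gives
\begin{equation*}
\bS^{(n)}_t = \sum_{k=1}^{n}\int_0^t \bS^{(n-k)}_u\,\mathrm d\bX^{(k)}_u + \frac12\!\!\sum_{\substack{k_1,k_2\ge 1\\ k_1+k_2\le n}}\!\!\int_0^t \bS^{(n-k_1-k_2)}_u\,\mathrm d\CV{\bX^{(k_1)}}{\bX^{(k_2)}}_u,
\end{equation*}
with trivial base case $\bS^{(1)}=\bX^{(1)}$, which already establishes both bounds at level one with constant $1$.

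For the upper bound I would estimate each summand by pairing the Burkholder--Davis--Gundy inequality with Hölder's inequality. The decisive observation is a scale-matching identity: at level $n$ the target integrability is $\HSe^{qN/n}$ and
\[
\frac{n-k}{qN}+\frac{k}{qN}=\frac{n}{qN},\qquad \frac{n-k_1-k_2}{qN}+\frac{k_1}{qN}+\frac{k_2}{qN}=\frac{n}{qN},
\]
so the integrability can be split across a factor carrying $\bS^{(n-k)}$ in $\Sesup^{qN/(n-k)}$ (controlled by $\HSe^{qN/(n-k)}$ via \eqref{eq:Sesup_vs_HSe}) and factor(s) carrying $\bX^{(k_i)}$ in $\HSe^{qN/k_i}$. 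BDG handles the martingale parts of the stochastic integrals, the Kunita--Watanabe bound $|\mathrm d\CV{\bM^{(k_1)}}{\bM^{(k_2)}}|\le \mathrm d\QV{\bM^{(k_1)}}^{1/2}\,\mathrm d\QV{\bM^{(k_2)}}^{1/2}$ takes care of the covariation (the mixed martingale/finite-variation bracket vanishes by continuity), and the pure finite-variation pieces are estimated directly via total-variation and Hölder. Raising the resulting product bounds to the $1/n$-th power and invoking Young's inequality converts them into a sum of contributions of the correct homogeneous weight, which the inductive hypothesis absorbs into a constant multiple of $\homnqN{\bX}$.

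The lower bound follows by solving the same recursion for $\bX^{(n)}$,
\[
\bX^{(n)}_t = \bS^{(n)}_t - \sum_{k=1}^{n-1}\int_0^t \bS^{(n-k)}_u\,\mathrm d\bX^{(k)}_u - \frac12\!\!\sum_{\substack{k_1,k_2\ge 1\\ k_1+k_2\le n}}\!\!\int_0^t \bS^{(n-k_1-k_2)}_u\,\mathrm d\CV{\bX^{(k_1)}}{\bX^{(k_2)}}_u,
\]
and running the identical BDG/Hölder scheme, now using the inductive control of $\bX^{(k)}$ and $\bS^{(k)}$ for $k<n$. The ``in particular'' statement is then immediate: if $\bX\in\HSe^{\infty-}(\tf)$, every truncation $\bX^{(0,N)}$ lies in every $\HSehom^{q,N}$, so the upper bound gives $\Sig(\bX)_{0,\cdot}\in\HSe^{\infty-}(\TT_1)$, whence each $\Sig(\bX)^w_{0,t}$ is $L^1$ and the expected signature is well defined.

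The main obstacle I expect is not the scalar BDG/Hölder step itself but the combinatorial bookkeeping that keeps constants bounded while summing over $k$ (and $k_1,k_2$) and propagating through the induction. The bracket term is the most delicate, because Kunita--Watanabe forces a three-factor Hölder split that can only be reconciled with the homogeneous structure of $\homnqN{\cdot}$ via the second scale-matching identity above; this is exactly the combinatorial feature that motivates the particular choice of the exponent $qN/n$ at level $n$ in the definition of $\homnqN{\cdot}$.
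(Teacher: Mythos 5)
The paper does not prove this theorem in the present text but defers entirely to \cite[Section~7.2]{FHT22}, and your outline --- induction over tensor levels applied to the It\^o form of \eqref{eq:stratonovich_sig}, BDG/Doob via \eqref{eq:Sesup_vs_HSe} combined with H\"older at the exponents $qN/(n-k)$ and $qN/k$ matched to the grading, Kunita--Watanabe for the bracket term, and inversion of the recursion to isolate $\bX^{(n)}$ for the lower bound --- is precisely the strategy of that proof and is correct. The only point you leave implicit, which is harmless, is that the homogeneity of $\homnqNs{\cdot}$ under dilation lets one normalize $\homnqNs{\bX}=1$ so that the product bounds close the induction without any genuine use of Young's inequality.
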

\begin{remark}
Let $\bX = (0, M, 0, \dotsc, 0)$ where $M \in \Ma(\Rd)$ is a martingale, then $$\homnqNs{\bX} = \Abss{M}_{\HSe^{qN}} = \Abss{\abs{\QV{M}_T}^{1/2}}_{\Lcal^{qN}},$$
and we see that the above estimate implies that
\begin{align*}
\max_{n=1, \dotsc, N} \Abss{\Sig(\bX)^{(n)}_{0, \cdot}}_{\Se^{qN/n}}^{1/n} \le C \Abss{M}_{\HSe^{qN}}.
\end{align*}
This estimate is already known and follows from the Burkholder-Davis-Gundy inequality for enhanced martingales, which was first proved in \cite{friz2006burkholder}.
\end{remark}

\section{Functional equations for the expected signature}\label{sec:functional}

In this section we present central functional equations for the expected signature and signature cumulants of semimartingales.
For better accessibility we will first present the formulas for discrete processes, which then will inform the results for purely continuous processes via a limiting procedure. 
We will however prove the results in the continuous case directly by a stochastic calculus approach.%

\subsection{Discrete processes}\label{sec:cadlag_case}

Let $(\Omega, \G,\PM)$ be a probability space with filtration $(\G_j)_{j = 0, \dots, J}$.
Further, let $(\bX_j)$ be a $\tf$-valued adapted process.
We can define a discrete version of the signature by  multiplying exponential increments $$\bS_0 = 1, \qquad \bS_{j} = \bS_{j-1} e^{\bX_j - \bX_{j-1}}, \quad \text{ for } j =1, \dots, J.$$
This is precisely the signature of the paths obtained from linearly interpolating the points $\bX_0, \bX_1, \dots, \bX_J$ in $\tf$.
Assume for simplicity that $\bX$ has moments of all orders, i.e.,  $\E[|\bX^{(n)}_j|^p] < \infty$ for all $n, p\in\mathbb{N}_{\ge 1}$,  for otherwise we need to truncate tensor levels.
We then correspondingly define the conditional expected signature of $(\bX_j)$ by
\begin{align*}
\esig_j :=& \E\big[\bS_j^{-1}\bS_J \;\vert \G_j\big] , \qquad j = 0, \dots, J.
\end{align*}
A simple computation yields for  $j = J-1, \dots, 0$:
\begin{align*}
\esig_j =&~ \E\big[e^{\bX_{j+1} - \bX_{j}} \cdots  e^{\bX_{J} - \bX_{J-1}} \;\vert \G_j\big] \\
=&~ \E\big[e^{\bX_{j+1} - \bX_{j}} \E\big[  e^{\bX_{j+2} - \bX_{j+1}} \cdots  e^{\bX_{J} - \bX_{J-1}} \;\vert \G_{j+1}\big]\;\big\vert\G_j\big] \\
=&~ \E\big[e^{\bX_{j+1} - \bX_{j}} \esig_{j+1} \;\vert \G_j\big].
\end{align*}
These identities permit to calculate $\esig$ in a backwards induction.
What is not revealed, however, is that to compute $\esig^{(n)}$ only strictly lower tensor levels $\esig^{(1)}, \dots \esig^{(n-1)}$ are needed.
To see that this is really the case, we first bring the identity into the following difference form
\begin{align}\label{eq:discrete_diff_identity}
0 = \E\big[(e^{\bX_{j+1} - \bX_{j}} - 1)\esig_{j+1} + (\esig_{j+1} - \esig_j) \;\big\vert \G_j\big], %
\end{align}
Summing over $\{j, j+1, \dots, J-1\}$ and conditioning to $\G_j$ we obtain the following
\begin{theorem}\label{thm:discrete} Given that the adapted $\tf$-valued process $(\bX_j)$ satisfies the moment condition $\E[|\bX^{(n)}_j|^p] < \infty$ for all $n, p\in\mathbb{N}_{\ge 1}$, its conditional expected signature $(\esig_j)$ is uniquely characterized by the equation
\begin{align*}
    \esig_j = 1 + \E\bigg[\sum_{i=j}^{J-1}(e^{\bX_{i+1} - \bX_{i}} - 1)\esig_{i+1} \;\bigg\vert \G_j\bigg],\qquad  j=0, \dots, J.
\end{align*}
In particular, projecting to tensor levels yields the following recursion over $n \in \{1, 2, \dots\}$ and $j= \{J, J-1, \dots, 1\}$
\begin{align*}
\esig_j^{(n)} =\sum_{k=0}^{n-1} \sum_{i=j}^{J-1}  \sum_{\Vert\ell\Vert = n-k} \frac{1}{|\ell|!}\E\Big[(\bX_{i+1} - \bX_{i})^{(\ell)}\esig_{i+1}^{(k)} \;\Big\vert \G_j\Big],
\end{align*}
where for $\ell \in \mathbb{N}_{\ge 1}^k$ and $\bx \in \tf$ we set $\bx^{(\ell)}= \bx^{(l_1)} \cdots \bx^{(l_k)}$.
\end{theorem}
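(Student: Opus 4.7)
My plan is to derive the one-step tower identity, telescope to get the integral equation, and then expand the exponential to reach the recursion. First, I would write $\bS_J = \bS_j\,e^{\bX_{j+1}-\bX_j}\cdots e^{\bX_J-\bX_{J-1}}$ and iterate the tower property---each factor $e^{\bX_{i+1}-\bX_i}$ is $\G_{i+1}$-measurable---to obtain $\esig_j = \E[e^{\bX_{j+1}-\bX_j}\esig_{j+1}\mid\G_j]$, with the boundary value $\esig_J = 1$. Subtracting the $\G_j$-measurable quantity $\esig_j$ from both sides produces the difference form \eqref{eq:discrete_diff_identity}. Next, I would replace $j$ by $i\ge j$ in this identity, condition on $\G_j$ (valid by the tower property), and sum over $i$ from $j$ to $J-1$. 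The telescoping sum $\sum_{i=j}^{J-1}(\esig_{i+1}-\esig_i)$ collapses to $1 - \esig_j$, yielding the claimed integral equation.

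For uniqueness, I would project both sides to tensor level $n$. Because $\bX_{i+1}-\bX_i\in\tf$ has no scalar component, the factor $e^{\bX_{i+1}-\bX_i}-1$ likewise sits in $\tf$, so the level-$n$ part of the product $(e^{\bX_{i+1}-\bX_i}-1)\esig_{i+1}$ depends only on $\esig_{i+1}^{(s)}$ for $s\le n-1$. An induction on $n$, with the trivial base $\esig^{(0)}\equiv 1$, then shows that the equation determines $(\esig_j^{(n)})_j$ uniquely once the lower levels are known.

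For the recursion, I would expand $e^{\bX_{i+1}-\bX_i}-1 = \sum_{m\ge 1}\frac{1}{m!}(\bX_{i+1}-\bX_i)^m$ and project to level $n$. In the statement's notation, the level-$r$ component of $(\bX_{i+1}-\bX_i)^m$ equals $\sum_{\Vert\ell\Vert=r,\;|\ell|=m}(\bX_{i+1}-\bX_i)^{(\ell)}$; multiplying by $\esig_{i+1}^{(k)}$ with $r+k=n$, $r\ge 1$, and reindexing the sum over $m$ as a free sum over multi-indices $\ell$ with $\Vert\ell\Vert = n-k$ reproduces the displayed formula with coefficient $1/|\ell|!$. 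The main (mild) obstacle is the combinatorial reindexing in this last step; integrability is immediate from the all-moments hypothesis, since every tensor-level component involved is a polynomial in finitely many $\bX_i^{(k)}$, each with finite moments of every order.
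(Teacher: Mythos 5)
Your proposal is correct and follows essentially the same route as the paper: the one-step tower identity $\esig_j=\E[e^{\bX_{j+1}-\bX_j}\esig_{j+1}\mid\G_j]$, its difference form, telescoping with $\esig_J=1$, and then level-wise projection of the exponential series to obtain both uniqueness and the recursion. The combinatorial reindexing $\sum_{m\ge1}\frac{1}{m!}\sum_{\Vert\ell\Vert=r,\,|\ell|=m}=\sum_{\Vert\ell\Vert=r}\frac{1}{|\ell|!}$ is exactly what the paper intends, and your integrability remark is adequate under the all-moments hypothesis.
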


\begin{example}(\textit{Markov Chains; see also }\cite[Section~5]{bonnier2023adapted}.)\label{expl:markov}
Consider a $d$-dimensional Markov chain $(X_j)$ and set $(\bX^{(1)}_j) = (X_j)$ and $\bX^{(n)}_j \equiv0$ for $n \neq 1$.
Denote by $p^{(i)}_j(x, \dd{y})$ the step-$i$ transition probability kernel of $X$ starting in $x\in \Rd$ at time $j$ with $p^{(0)}_j(x, \dd{y}) = \delta_{x}(\dd{y})$.
We see inductively that almost surely $\esig_j^{(n)} = f^n_j(X_j)$ where the functions $f^{n}_j : \Rd \to (\Rd)^{\otimes n}$ satisfy the recursive scheme
\begin{align*}
f^n_j(x) =\sum_{k=1}^{n-1} \sum_{i=j}^{J-1}   \frac{1}{n!} \int_{\Rd}\int_{\Rd} (z - y)^{\otimes(n-k)}f_{i+1}^{k}(z)p_i^{(1)}(x, \dd{y}) p_j^{(i-j)}(y, \dd{z})
\end{align*}
for $n = 1, 2\dots$  and $j = J-1, \dots, 0$ with and $f_j^{0} = 1$, $f_J^n = 0$ for $n \ge 1$.
\end{example}

\begin{example} (\textit{Random walks in the group}\cite{breuillard2009from, chevyrev2018random}.)
Let $(g_j)_{j=1, \dots, J}$ be an IID sequence with values in the Lie algebra $\tf^N$ and
exponential image $G_j = e^{g_j} \in \TT_1$.
Consider the random walk \(\bX_j=\sum_{k=1}^jg_k\) and its signature
\[
    \bS_j = \prod_{k=1}^j G_k,\quad j=1,\dotsc,J.
\]
Then either directly, or following the computations above, we obtain the conditional expected signature
$$
    \esig_j = \E\big(\bS_j^{-1}\bS_J\big|\G_j \big) = M^{ J-j},\quad j=0,\dotsc,J
$$
assuming that $M= \E(G_1)$ is (componentwise) well defined.

For instance, the case of a planar lattice random walk is covered by $N=1,d=2$, and equal weighted point masses at $\pm e_1 \pm e_2$, that is, \(\bX\) is a simple random walk in \(\R^2\).
A direct computation shows that
\begin{align*}
    M ~=~ \frac14\left( e^{e_1} + e^{-e_1} + e^{e_2}+ e^{-e_2} \right) ~=~ \frac12\left( \cosh(e_1) + \cosh(e_2) \right),
\end{align*}
where \(\cosh\colon \TT\to\TT\) is defined by its power series (see \Cref{sec:tensor_semimartingales}).
In particular, for any tensor series \(\bx\in\TT_0\),
\[
    \cosh(\bx) = 1 + \frac12\bx^2 +\dotsb
\]
For \(\lambda>0\), let \(\delta_\lambda:\TT\to\TT\) denote the dilation operator.
Diffusive rescaling of moments yields
\[
    \delta_{J^{-1/2}}\esig_0 = (\delta_{J^{-1/2}}M)^{J} = \left( \mathbf{1}+\frac{1}{2J}(e_1^2 + e_2^2) + \operatorname{o}(J^{-1})\right)^{J}.
\]
Upon taking the limit as \(J\to\infty\) we recover Fawcett's formula for planar Brownian motion.

For an example with $N=2$ consider equal weighted point masses at $\{\pm [e_i,e_j] \;:\; 1\le i < j \le d\}$.
A similar argument leads to a Fawcett-type formula
\[
    \lim_{J\to\infty}\delta_{J^{-1/4}}\esig_0(J) = e^{\frac{2}{d(d-1)}\sum_{i<j} ([e_i,e_j])^{2}}.
\]
for the level-$4$ Brownian rough paths that recently arose in \cite{hairer2024renormalization} as limits of fractional Brownian motion with $H\to\frac{1}{4}$ (see also \Cref{sec:brownian_rough_paths}).
\end{example}

We close the discussion on discrete processes by arguing how to extend these results to the continuous case.
Expressing the functional equation from Theorem~\ref{thm:discrete} in the alternative form
\[ \esig_j = 1 +\mathbb{E} \bigg[ \sum_{i = j}^{J - 1}
   (e^{\bX_{i + 1} - \bX_i} - 1) (\esig_i +
   (\esig_{i + 1} - \esig_i)) \;\bigg\vert \G_j \bigg],
\]
we see that in the diffusive limit, setting $\delta \bX_i = \bX_{i + 1} -
\bX_i$, we can expand
\[ \sum_i (e^{\bX_{i + 1} - \bX_i} - 1) \esig_i
   \approx \sum_i (\delta \bX_i + \frac{1}{2} (\delta
   \bX_i)^{2} + \cdots) \esig_i, \]
and similarly, with $\delta \esig_i = \esig_{i + 1} -
\esig_i$,
\[ \sum_i (e^{\bX_{i + 1} - \bX_i} - 1) (\esig_{i +
   1} - \esig_i) \approx \sum_i (\delta \bX_i + \cdots)
   \delta \esig_i, \]
to anticipate the functional relation of Theorem \ref{thm:main_esig}, derived by stochastic calculus.

For the conditional signature cumulants $(\kap_j) = (\log\esig_j)$ we obtain a first identity by multiplying \eqref{eq:discrete_diff_identity} with $e^{-\kap_j}$ from the right:
$$
0 = \E\big[(e^{\bX_{j+1} - \bX_{j}} - 1)e^{\kap_{j+1}}e^{-\kap_j} + (e^{\kap_{j+1}}e^{-\kap_j} - 1) \;\big\vert \G_j\big].
$$
Even in the commutative (one-dimensional) case the above identity gives non-trivial relations for conditional cumulants (see \Cref{sec:diamonds_discrete}).
In contrast to the continuous case, transforming the above formula entirely to logarithmic coordinates under the use of the Baker-Campbell-Hausdorff (BCH) formula is in general not possible, since after all the logarithm is non-linear and cannot be exchanged with the conditional expectation.
Nevertheless, a martingale transform yields the identity
$$
0 = \E\bigg[ \sum_{i=j}^{J-1}H(\ad{\kap_i})\Big((e^{\bX_{i+1} - \bX_{i}} - 1)e^{\kap_{i+1}}e^{-\kap_i} + (e^{\kap_{i+1}}e^{-\kap_i} - 1)\Big) \;\bigg\vert \G_j\bigg],
$$
where following \cref{eq:exp.taylor} a second order expansion in the terms $\delta \bX$ and $\delta \kap_i = \kap_{i+1} - \kap_i$ yields
\begin{align*}
e^{\kap_{i+1}}e^{-\kap_i} - 1 = G(\ad{\kap_i})(\delta \kap_i) + Q(\ad{\kap_i})(\delta \kap_i \otimes \delta \kap_i) + \dots
\end{align*}
and
\begin{align*}
    (e^{\bX_{i+1} - \bX_{i}} - 1)e^{\kap_{i+1}}e^{-\kap_i} = \Big(\delta\bX_i + \frac12(\delta\bX_i)^2 + \dots\Big)\Big(1 + G(\ad{\kap_i})(\delta \kap_i) + \dots\Big),
\end{align*}
allows us to anticipate that the diffusive limit leads to the functional equation given in Theorem~\ref{thm:main_sigcum} below.

\subsection{The continuous case}\label{sec:continuous_case}

Throughout this section we will assume that $(\Omega, \F, (\F_t), \mathbb{P})$ is a filtered probability space satisfying the usual conditions and with the property that every martingale has a continuous version. This is for instance the case if the filtration is the natural and completed filtration of a Brownian motion.

To streamline the derivation, we will start from first principles and introduce the key concepts from Itô-calculus in the non-commutative setting when first needed, not focusing on integrability considerations which are fully resolved in \cite{FHT22}.
For $\bX \in\HSe^{\infty-}(\tf)$ the conditional expected signature and the signature cumulants 
$$\esig (T) = (\E_t(\Sig(\bX)_{t,T}))_{0 \le t \le T} \in \Sec(\TT_1), \qquad \kap (T) = (\log \esig_t(T) )_{0 \le t \le T} \in \Sec(\tf),$$
are well defined.
The derivation starts by defining the martingale $$\bM := (\E_t[\Sig(\bX)_{0,T}])_{0\le t \le T} \in \Ma(\TT_1),$$ which is continuous under the given assumptions on the filtration.
It factors into
\begin{align*}
\bM_t
~=~ \Sig(\bX)_{0,t}\E_t[\Sig(\bX)_{t,T}] 
~=~ \Sig(\bX)_{0,t} \esig_t(T).
\end{align*}
One is now inclined to apply Itô's rule to the product on the right-hand side as this should reveal non-trivial cancellations implied by the martingality of $\bM$.
For ease of notation we denote $\bS_t := \Sig(\bX)_{0,t}$ and $\esig_t := \esig_t(T)$.
Then the definitions put forward in Section~\ref{sec:tensor_semimartingales} directly yield the product rule
\begin{align}\label{eq:derivation_product}
   \bM_t =&~ 1 + \int_0^t\dd\bS_u\,\esig_u + \int_0^t \bS_u\,\dd \esig_u + \innerbracketsmall{\bS}{\esig}_{t} \nonumber\\
   =&~
   1+ \int_0^t \bS_u \left( \Big(\dd{\bX}_u + \frac{1}{2}\dd{\QV{\bX}}_u\Big) \esig_u +  \dd\esig_u + \dd\innerbracketsmall{\bX}{\esig}_{u} \right)
\end{align}
where in the second line we have used the Itô-integral form of $\bS$ in \eqref{eq:stratonovich_sig}.
The martingality of $\bM$ and invertability of the tensor $\bS_t$ for all times $t \in[0,T]$ imply that the term in the bracket is the differential of a local martingale.
By Theorem~\ref{thm:bdg_signature} we have $\bS, \esig \in \HSe^{\infty-}(\tf)$ and it is therefore not difficult to conclude that this local martingale is also a true martingale.
Taking conditional expectations of its integral then yields the following functional equation $\esig$:

\begin{theorem}\label{thm:main_esig}
The conditional expected signature $\esig = \esig(T)$ of $\bX \in\HSe^{\infty-}(\tf)$ is the unique solution (up to indistinguishably) of the following functional equation
\begin{equation}\label{eq:esig_master}
\begin{multlined}
\esig_t  = 1 + \E_t\bigg\{ \int_t^T \Big(\dd{\bX}_u + \frac{1}{2}\dd{\QV{\bX}}_u\Big)\, \esig_u +  \innerbracketsmall{\bX}{\esig}_{t,T}
\bigg\},\qquad 0 \le t \le T.
\end{multlined}
\end{equation}
Furthermore, if $\bX\in\HSehom^{1,N}$ for some $N\in\None$, then the identity
\eqref{eq:esig_master} still holds true for the
truncated expected signature $\esig \coloneq (\E_t(\Sig(\bX^{(0,N)})_{t,T}))_{0\le t \le T}$.
\end{theorem}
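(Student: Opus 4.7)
The plan is to take the informal derivation displayed just before the theorem as the core of the existence argument, then to establish uniqueness by exploiting the grading of $\TT$, and finally to observe that the truncated case follows from the same argument interpreted in $\TT^N$.

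\textbf{Existence.} Writing $\bS_t\coloneq\Sig(\bX)_{0,t}$ and $\bM_t\coloneq\E_t[\Sig(\bX)_{0,T}]$, Theorem~\ref{thm:bdg_signature} together with $\bX\in\HSe^{\infty-}(\tf)$ gives $\bS,\esig\in\HSe^{\infty-}(\TT_1)$; under the continuity assumption on the filtration, $\bM$ is a continuous $\TT_1$-valued true martingale. The Chen identity (Lemma~\ref{lem:Chen}) yields $\bM_t=\bS_t\esig_t$, and applying the tensor It\^o product rule of Section~\ref{sec:tensor_semimartingales} together with the It\^o form \eqref{eq:stratonovich_sig} produces \eqref{eq:derivation_product}. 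Since $\bS_u\in\TT_1$ is invertible for every $u$, the martingality of $\bM$ forces the bracketed integrand
\[
\bigl(\dd\bX_u+\tfrac12\dd\QV{\bX}_u\bigr)\esig_u+\dd\esig_u+\dd\innerbracketsmall{\bX}{\esig}_u
\]
to be the differential of a (a priori local) martingale $\bN$; integrability of $\bS$, $\esig$ and $\bX$ upgrades it to a true martingale componentwise via BDG as in \cite{FHT22}. Taking $\E_t[\cdot]$ of the increment of this martingale from $t$ to $T$ and using the initial condition $\esig_T=1$ yields \eqref{eq:esig_master}.

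\textbf{Uniqueness.} Suppose $\tilde\esig\in\Sec(\TT_1)$ satisfies \eqref{eq:esig_master} and has the integrability needed to make all terms meaningful. Evaluating the equation at $t=T$ gives $\tilde\esig_T=1$. Projecting to tensor level $n$ and using $\bX^{(0)}=0$ (hence also $\QV{\bX}^{(0)}=\QV{\bX}^{(1)}=0$ and $\innerbracketsmall{\bX}{\,\cdot\,}^{(0)}=0$), one obtains
\begin{equation*}
\tilde\esig_t^{(n)}=\delta_{n,0}+\E_t\Bigl\{\sum_{\substack{\ell+m=n\\ \ell\ge 1}}\Bigl[\int_t^T\!\bigl(\dd\bX_u^{(\ell)}+\tfrac12\dd\QV{\bX}_u^{(\ell)}\bigr)\tilde\esig_u^{(m)}+\innerbracketsmall{\bX^{(\ell)}}{\tilde\esig^{(m)}}_{t,T}\Bigr]\Bigr\},
\end{equation*}
so the right-hand side depends only on $\tilde\esig^{(0)},\dots,\tilde\esig^{(n-1)}$ and on components of $\bX$ of level $\le n$. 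Induction on $n\in\N$, starting from $\tilde\esig_t^{(0)}\equiv 1$, then shows $\tilde\esig=\esig$ up to indistinguishability on each fixed tensor level, and hence globally in $\Sec(\TT_1)$.

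\textbf{The truncated case.} If $\bX\in\HSehom^{1,N}$, run the same argument inside $\TT^N_1$: replace the concatenation in \eqref{eq:stratonovich_sig} by its truncation, observe that $\Sig(\bX^{(0,N)})$ solves the corresponding SDE in $\TT^N_1$, and note that Theorem~\ref{thm:bdg_signature} with $q=1$ ensures $\Sig(\bX^{(0,N)})_{0,\cdot}\in\HSehom^{1,N}$, which is enough to make $\bM$ a genuine martingale componentwise and to justify the interchange of integration and conditional expectation. The recursive structure used for uniqueness only involves finitely many levels, so it carries over verbatim.

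\textbf{Main obstacle.} The computationally nontrivial step is the rigorous application of the tensor-valued It\^o product rule to $\bS_u\esig_u$ together with the verification that the stochastic integral in \eqref{eq:derivation_product} is a true (not merely local) martingale; this is exactly where the $\HSe^{\infty-}$ assumption and the BDG-type estimate of Theorem~\ref{thm:bdg_signature} are used essentially, and the detailed integrability bookkeeping is the main technical content, outsourced to \cite{FHT22}.
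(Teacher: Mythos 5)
Your proposal is correct and follows essentially the same route as the paper: factor the martingale $\bM_t=\E_t[\Sig(\bX)_{0,T}]$ via the Chen relation as $\bS_t\esig_t$, apply the tensor It\^o product rule, use invertibility of $\bS$ and the martingale property to identify the bracketed integrand as a true martingale differential (with the integrability upgrade via Theorem~\ref{thm:bdg_signature} outsourced to \cite{FHT22}), take $\E_t$ of the increment using $\esig_T=1$, and obtain uniqueness by induction over tensor levels since the level-$n$ right-hand side only involves $\esig^{(0)},\dots,\esig^{(n-1)}$. The truncated case is likewise handled exactly as in the paper, by running the argument in $\TT^N_1$.
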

It is crucial to understand that the uniqueness part of the above statement follows by projecting the above equation to a tensor level, say $n\ge 1$,  and then noting that the above right-hand side only depends on tensor levels $\esig^{(k)}$ for $k \le n-1$.
In other words, equation \eqref{eq:esig_master} directly leads to a recursive scheme for calculating $\esig$ which is explicitly spelled out in Corollary~\ref{cor:recursion_mu} of Section~\ref{sec:reucsrion}.

Moving forward to the signature cumulants $\kap := \kap_\cdot(T)$ one needs to further resolve the differential of $\esig = e^{\kap}$ in \eqref{eq:derivation_product}.
The key is following Itô-formula for the tensor exponential map.
\begin{lemma}\label{lem:ito_exp_rule} 
For $\bZ \in\Sec(\tf)$ it holds
\begin{equation*}
  e^{\bZ_t} - e^{\bZ_0} = \begin{multlined}[t]\int_0^t G(\ad{\bZ_{u}})(\dd\bZ_u)e^{\bZ_u} 
  + \int_0^{t} Q(\ad{\bZ_{u}})(\dd\outerbracket{\bZ}{\bZ}_u)e^{\bZ_u},
\end{multlined}
\end{equation*}
for all $0 \le t \le T$.
\end{lemma}
A version of the above lemma in a matrix setting previously appeared in \cite{kamm2020stochastic}; the full proof can be found in \cite[Lemma~7.8]{FHT22}.
Loosely speaking, the form of the right-hand side follows immediately from the order-2 Taylor expansion for the tensor exponential given in \cref{eq:exp.taylor}.

Applying \Cref{lem:ito_exp_rule} to \(\kap\) in \eqref{eq:derivation_product} and collecting terms, we arrive at
\begin{align*}
\bM_t = 1 + \int_0^t \bS_u \dd \bL_u  e^{\kap_u(T)},
\end{align*}
where $\bL \in \Sec(\tf)$ is given by
\begin{align*}
\begin{split}
\bL_t =~& \bX_{t} + \frac{1}{2}\QV{\bX}_{t} +
    \int_0^{t} G(\ad{\kap_{u-}})(\dd\kap_u) + \frac{1}{2}\int_t^{T}Q(\ad{\kap_{u-}})(\dd\outerbracket{\kap}{\kap}_u) \\
&+\int_t^{T}(\Id\odot G(\ad{\kap_{u-}}))(\dd\outerbracket{\bX}{\kap}_u).
\end{split}
  \end{align*}
Again the martingality of $\bM$ and invertability of $\bS_t, e^{-\kap_t} \in \Sec(\TT_1)$ imply that $\bL$ is a local martingale.
The main work done in the proof of \cite[Theorem~4.1]{FHT22} is to show that under the given assumptions on $\bX$ it also holds that $\bL$ is also a true martingale.
Taking expectations then yields a first functional equation for $\kap$:
$$\E_t[\bL_{t,T}] = 0.$$
Similarly to the functional equation \eqref{eq:esig_master} for $\esig$, the above equation leads to a recursive scheme over tensor levels which uniquely determines $\kap$.
However,  there is still a degree of implicitness that can be removed.

To this end note that the operator $G(\ad{\bx})$ has the following inverse\footnote{This follows from the fact that for linear operators, power series calculus is multiplicative in the sense that the composition \(G(L)H(L)\) equals \(GH(L)\) where \(GH(z)\) is the Cauchy product of \(G\) and \(H\). Moreover, \(G(z)\) and \(H(z)\) are multiplicative inverses as power series so that \(GH(z) = 1\).}
\begin{equation}\label{eq:GH_def}
    H(\ad{\bx}) = \frac{\ad\bx}{\exp(\ad\bx)-1} \coloneq \sum_{k=0}^{\infty}\frac{B_k}{k!}(\ad{\bx})^{k},
\end{equation}
with Bernoulli numbers $(B_k)_{k\ge 0} = (1, -\frac{1}{2}, \frac{1}{6}\dotsc)$.
Integrating $H(\ad{\kap})$ against $\dd\bL$ and verifying that the resulting process is in $\Ma(\tf)$ we arrive at a second functional equation for $\kap$:

\begin{theorem}\label{thm:main_sigcum}
The signature cumulant $\kap$ of $\bX \in\HSe^{\infty-}(\tf)$ is the unique solution (up to indistinguishably) of the following functional equation
\begin{align}\label{eq:master_sigcum}
\begin{multlined}
\kap_t = \E_t\bigg\{ \int_t^T H(\ad{\kap_{u}})(\mathrm d\bX_{u})
+ \frac{1}{2}\int_t^{T}H(\ad{\kap_{u}})(\mathrm d\QV{\bX}_{u})\\
 + \frac{1}{2} \int_t^{T}H(\ad{\kap_{u}}) \, Q(\ad{\kap_{u}})(\mathrm d\outerbracket{\kap}{\kap}_u)\\
+\int_t^{T}H(\ad{\kap_{u}})\, (\Id\odot G(\ad{\kap_{u}}))(\mathrm d\outerbracket{\bX}{\kap}_u)
\bigg\},%
\end{multlined}
\end{align}
for all $0 \le t \le T$.
Furthermore, if $\bX\in\HSehom^{1,N}$ for some $N\in\None$, then the identity \eqref{eq:master_sigcum} still holds true for the
truncated signature cumulant $\kap \coloneq (\log\E_t(\Sig(\bX^{(0,N)})_{t,T}))_{0\le t \le T}$.
\end{theorem}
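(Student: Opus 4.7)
The plan is to complete the derivation already outlined in the text. Setting $\bS_t = \Sig(\bX)_{0,t}$, I would form the continuous $\TT_1$-valued martingale $\bM_t = \E_t[\Sig(\bX)_{0,T}]$, which factors as $\bM_t = \bS_t\,\esig_t$ by the Chen relation. Itô's product rule together with the Itô form \eqref{eq:stratonovich_sig} of $\bS$ reproduces the expansion \eqref{eq:derivation_product}; then inserting Lemma~\ref{lem:ito_exp_rule} to resolve $d\esig = d(e^{\kap})$ rewrites this in the form
\[
\bM_t \;=\; 1 + \int_0^t \bS_u\,(d\bL_u)\,e^{\kap_u},
\]
where $\bL\in\Sec(\tf)$ is the explicit combination of integrals already displayed in the text preceding the theorem.

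Since multiplication of $\TT_1$-valued processes by the invertible series $\bS_u$ and $e^{\kap_u}$ is level-wise injective, the martingality of $\bM$ forces $\bL$ to be a continuous local martingale. The principal difficulty, and the main obstacle, is to upgrade $\bL$ to a \emph{true} martingale so that $\E_t[\bL_{t,T}] = 0$. Here I would combine the hypothesis $\bX\in\HSe^{\infty-}(\tf)$ with the BDG-type bound in Theorem~\ref{thm:bdg_signature} to get $\bS,\esig \in \HSe^{\infty-}(\TT_1)$ and consequently $\kap\in\HSe^{\infty-}(\tf)$. The crucial structural observation is that, in the series \eqref{eq:power_series_integral}, each operator $G(\ad \kap), Q(\ad \kap), H(\ad \kap)$ acts on tensor level $n$ using only finitely many lower-level components $\kap^{(1)},\dots,\kap^{(n-1)}$. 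Thus every level $\bL^{(n)}$ is a finite combination of classical semimartingale integrals, and level-by-level BDG and Hölder estimates furnish an $L^q$ majorant for all $q<\infty$, giving true martingality.

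Granting this, taking conditional expectations yields an intermediate functional equation involving $G(\ad \kap)\,d\kap$. To reach the cleaner form \eqref{eq:master_sigcum}, I would integrate $H(\ad \kap_u)$ against $d\bL_u$: by the identity $H(\ad \bx)\circ G(\ad \bx) = \mathrm{Id}$ of formal power series in $\ad\bx$, the composition cancels the $G(\ad\kap)$ factor and exposes $d\kap$ itself, which when brought to the left-hand side gives the four terms in \eqref{eq:master_sigcum}. This rearrangement is legitimate because $H(\ad\kap)$ satisfies the structural conditions \eqref{cond:tensor_integrator_one}--\eqref{cond:tensor_integrator_two}, so the integral is well-defined by \eqref{eq:power_series_integral}, and it inherits martingality from $\bL$ by the same level-wise integrability argument.

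Uniqueness follows by projecting \eqref{eq:master_sigcum} to level $n$: since $H,G,Q$ start with the constant term $\mathrm{Id}$ and each additional power of $\ad \kap$ strictly raises tensor degree, the right-hand side at level $n$ depends on $\kap$ only through $\kap^{(1)},\dots,\kap^{(n-1)}$, leading to an inductive determination analogous to that for $\esig$ in Theorem~\ref{thm:main_esig}. For the truncated statement one repeats the entire argument inside $\TT^N$, using that $\bX \in \HSehom^{1,N}$ provides precisely the integrability demanded by Theorem~\ref{thm:bdg_signature} on the truncated signature and on $\log\E_t[\Sig(\bX^{(0,N)})_{t,T}]$.
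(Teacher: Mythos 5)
Your proposal is correct and follows essentially the same route as the paper: factor the martingale $\bM_t=\bS_t\,\esig_t$, apply the Itô product rule together with Lemma~\ref{lem:ito_exp_rule} to extract the local martingale $\bL$, upgrade it to a true martingale via the $\HSe^{\infty-}$/BDG estimates of Theorem~\ref{thm:bdg_signature}, and then integrate $H(\ad{\kap})$ against $\dd\bL$ to invert $G(\ad{\kap})$ and isolate $\dd\kap$, with uniqueness by the graded recursion. The paper itself defers the detailed true-martingale verification (the step you correctly flag as the principal difficulty, especially in the truncated $\HSehom^{1,N}$ setting) to \cite{FHT22}, so nothing essential is missing from your outline.
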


\section{Recursive formulas and diamond products}\label{sec:reucsrion}

Theorem \ref{thm:main_esig} and \ref{thm:main_sigcum} allow for an iterative computation of the expected signature and signature cumulants, by projecting the functional equations to tensor levels. 
In the following subsections we will explicitly spell out these recursive schemes and then rephrase them in terms of so called {\em diamond products} as they were introduced for scalar semimartingales in \cite{alos2018exponentiation}. 

While all recursions can be stated for the general \cadlag case (\cite[Section~4.2]{FHT22}), we only present the continuous case for simiplicity. Therefore we shall presume the assumptions on the filtration from the continuous setting in Section~\ref{sec:continuous_case} throughout this section. 

\subsection{Recursive formula for signature moments}
This recursive scheme trivially starts with $\fmu^{(0)} \equiv 0$ and the first level of signature moments
\begin{align*}
     \fmu^{(1)}_t = \E_t\left(\bX^{(1)}_{t, T}\right).
\end{align*}
and continuous with a first recursion in the second level
\begin{align}\label{eq:level_two_esig}
  \esig_t^{(2)} &= \mathbb{E}_t\bigg[ \bX_{t,T}^{(2)} + \frac{1}{2}\QV{\bX^{(1)}}_{t,T} + \int_t^T\dd \bX^{(1)}_u \,\esig^{(1)}_{u}+ \CV{\bX^{(1)}}{\esig^{(1)}}_{t,T}\bigg].
\end{align}
Projecting the functional equation \eqref{eq:esig_master} for $\esig$ to higher tensor levels immediately gives the following
\begin{corollary}\label{cor:recursion_mu}Let $\bX\in\HSehom^{1,N}$ for some $N\in \mathbb{N}_{\ge 1}$. Then for $n = 1, \dots, N$ we have
\begin{multline}\label{eq:recursion_esig}
  \esig_t^{(n)} = \sum_{k=0}^{n-1}\mathbb{E}_t\bigg\{ \int_t^T\mathrm{d}\bX^{(n-k)}_u\,\esig^{(k)}_u+\frac12\sum_{i=1}^{n-k}\int_t^T\mathrm{d}\QV{\bX^{(i)},\bX^{(n-k-i)}}_u\,\esig_u^{(k)} \\+  \QV{\bX^{(n-k)},\esig^{(k)}}_{t,T}\bigg\}.
\end{multline}
\end{corollary}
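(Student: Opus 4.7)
The identity \eqref{eq:recursion_esig} is exactly the level-$n$ projection of the functional equation \eqref{eq:esig_master}, so my plan is to apply $\pi_n$ to both sides, exploit the graded multiplication in $\TT$, and read off the three sums.

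For $n\ge 1$ the scalar term $1$ drops out. Using $\bX\in\Sec(\tf)$ (no scalar component), graded multiplication in $\TT$ gives
\[
\pi_n\!\left(\int_t^T (\mathrm d\bX_u)\,\esig_u\right) = \sum_{k=0}^{n-1}\int_t^T (\mathrm d\bX^{(n-k)}_u)\,\esig^{(k)}_u.
\]
From the very definition of the inner bracket, $\pi_m(\QV{\bX}) = \sum_{i=1}^{m-1}\CV{\bX^{(i)}}{\bX^{(m-i)}}$; applying graded multiplication once more produces the middle sum of \eqref{eq:recursion_esig}, where the boundary indices $i \in \{n-k, 0\}$ just contribute zero terms but are kept for the sake of a uniform summation. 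Analogously, the same bracket formula applied to $\CV{\bX}{\esig}$ produces the last sum. Assembling these three identities inside $\E_t\{\cdot\}$ gives \eqref{eq:recursion_esig}.

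Three remarks close the plan. First, exchanging $\pi_n$ with the stochastic integrals and conditional expectations is routine since $\pi_n$ is linear, bounded and acts componentwise. Second, the crucial structural point is that the right-hand side depends only on $\esig^{(0)},\dotsc,\esig^{(n-1)}$; this observation both turns the identity into an honest recursive scheme and yields uniqueness by induction on $n$, starting from $\esig^{(0)}\equiv 1$. Third, integrability is not an issue at this stage: all required bounds have already been absorbed into the hypothesis $\bX\in\HSehom^{1,N}$ and the BDG-type estimate of \Cref{thm:bdg_signature} underlying \Cref{thm:main_esig}. Consequently there is really no hard step—the corollary is a direct unfolding of \Cref{thm:main_esig}, and the truncated case is handled identically by quoting the truncated version in the same theorem.
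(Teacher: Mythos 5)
Your proposal is correct and is exactly the paper's argument: the corollary is obtained by projecting the functional equation \eqref{eq:esig_master} of \Cref{thm:main_esig} onto the $n$-th tensor level and using the graded structure of the concatenation product and of the inner bracket, with the stated integrability already supplied by \Cref{thm:main_esig}. Your additional remarks on linearity of $\pi_n$ and on the recursive/uniqueness structure match the discussion surrounding the theorem, so nothing further is needed.
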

\begin{example}\label{exmpl:esig_recursion_martingale}
Consider the special case with vanishing higher order components, $\bX^{(i)} \equiv 0$, for $i \ne 1$,
and $\bX = \bX^{(1)} \equiv M$, a $d$-dimensional continuous square-integrable
martingale. %
We then have $\fmu^{(1)} \equiv 0$ and it directly follows from Stratonovich-Itô correction that
$$
 \esig^{(2)}_t =  \E_t \int_t^T (M_u - M_t) \circ \dd M_u = \frac12 \E_t \QV{M}_{t,T} =  \frac12 \E_t \QV{\bX^{(1)}}_{t,T}.
$$
which is indeed a (very) special case of the general expression for $\esig^{(2)}$.

Assuming that $M \in \HSe^{N}$
the recursion proceeds for levels $n\in \{2,  3,\dots, N\}$ and simplifies due to the martingality to
\begin{align*}
  \esig_t^{(n)} = \mathbb{E}_t\bigg\{\frac12\int_t^T\mathrm{d}\CV{M}{M}_u\, \esig_u^{(n-2)} + \CV{M}{\esig^{(n-1)}}_{t,T}\bigg\}.
\end{align*}
In case $M$ is a Gaussian martingale of the form $M_t = \int_0^t \sigma(s). \dd{B_s} \in \HSe^{\infty-}$ for an $m$-dimensional Brownian motion $B$ and $\sigma \in L^2([0,T]; \R^{d\times m})$,  an induction immediately shows that $\esig$ is deterministic.
Whence brackets with $\esig$ vanish and we obtain $\esig^{(n)} \equiv 0$ for odd $n$ and  
\begin{align}\label{eq:esig_gaussian martingale}
  \esig_t^{(n)} = \frac12\int_t^T\sigma.\sigma^T(u) \;\esig_u^{(n-2)}\; \dd{u}, \qquad\text{ for even }n.
\end{align}
When $M$ is a standard Brownian motion, i.e. , $\sigma \equiv \I_d \in \R^{d\times d}$ is the identity matrix,  we now readily verify Fawcett's formula \cite{fawcett2002problems,friz2020course}
\begin{align*}
\esig_t = \exp\Big(\frac{T-t}{2} \I_d\Big), \qquad t\in[0,T].
\end{align*}
\end{example}
\begin{remark}
Fawcett's formula is reminiscent of the case of real-valued Gaussian vectors, which are in fact characterized by having non-vanishing cumulants only at degree 2, and one might wonder whether that is also the case for expected signatures as they play a role analogue to characteristic functions for paths.
In that case, the formula is valid \emph{only} for Brownian motion.
Other Gaussian processes may have arbitrarily complicated cumulants as can already be seen from \Cref{exmpl:esig_recursion_martingale} above. 
For an even more elementary example, let \(d>1\) and consider a centered Gaussian vector \(Z=(Z^1,\dotsc,Z^d)\in\R^d\) with covariance matrix \(\Sigma\), and \(X_t\coloneqq tZ\).
Clearly, the law of \(X_t\) is Gaussian for every fixed \(t>0\).
Moreover,
\[
  \Sig(X)_{t,T}=\exp\left( (T-t)Z \right)
\]
so that the first few terms of the expected signature are \(\esig^{(1)}_t(T)=\esig^{(3)}_t(T)=0\) (in fact all odd degrees vanish), and
\begin{align*}
  \esig^{(2)}_t(T) &= \frac{(T-t)^2}{2}\sum_{i_1,i_2=1}^d\Sigma^{i_1i_2}e_{i_1i_2}\\
  \esig^{(4)}_t(T) &= \frac{(T-t)^4}{4!}\sum_{i_1,i_2,i_3,i_4=1}^d(\Sigma^{i_1i_2}\Sigma^{i_3i_4}+\Sigma^{i_1i_3}\Sigma^{i_2i_4}+\Sigma^{i_1i_4}\Sigma^{i_2i_3})e_{i_1i_2i_3i_4}
\end{align*}
where in the last term we have used Isserli's theorem to express mixed moments as products of covariances.
Using the power series expansion of \(\log\) we see that the first four signature cumulants are \(\kap^{(1)}_t(T)=\kap^{(3)}_t(T)=0\), and
\begin{align*}
  \kap^{(2)}_t(T) &= \esig^{(2)}_t(T) \\
  \kap^{(4)}_t(T) &= \esig^{(4)}_t(T) - \frac12\esig^{(2)}_t(T)\esig^{(2)}_t(T) \\
  &= \frac{(T-t)^4}{4!}\sum_{i_1,i_2,i_3,i_4=1}^d(\Sigma^{i_1i_3}\Sigma^{i_2i_4}+\Sigma^{i_1i_4}\Sigma^{i_2i_3}-2\Sigma^{i_1i_2}\Sigma^{i_3i_4})e_{i_1i_2i_3i_4}.
\end{align*}
In particular, the fourth cumulant does not vanish.
A general diagramatic expansion, analog to Isserlis' theorem, for the expected signature of a Gaussian processes in terms of its correlation function has been recently obtained by Cass and Ferrucci \cite{cass2024wiener}.
\end{remark}

\subsection{Recursive formula for signature cumulants}
The first level of signature cumulants are identical to the signature moments $$\kap^{(1)}_t = \fmu^{(1)}_t = \E_t\left(\bX^{(1)}_{t, T}\right).$$
The formula we obtain from projecting \eqref{eq:master_sigcum} to the second tensor level reads
\begin{align*}
    \kap^{(2)}_t & = \E_t \bigg[\bX^{(2)}_{t,T}
    + \frac{1}{2}\QV{\bX^{(1)}}_{t,T} +\frac12 \int_t^T\Lie{\kap^{(1)}_{u}}{\dd \bX^{(1)}_u} + \frac{1}{2}\QV{\kap^{(1)}}_{t,T} + \CV{\bX^{(1)}}{\kap^{(1)}}_{t,T}\bigg].
\end{align*}
From the very definition of the logarithm we must have $\kap^{(2)} = \esig^{(2)} - \frac12 \fmu^{(1)} \esig^{(1)},$
which is in line with the above expression for $\kap^{(2)}$ and the previous formula \eqref{eq:level_two_esig} for $\esig^{(2)}$, but already requires a little work to verify.
Indeed, using the martingality of $\bX^{(1)}+\kap^{(1)} = \E_\cdot[\bX^{(1)}_T]$ we resolve both expressions using the Itô-product rule in the following calculation
\begin{align*}
    \E_t \bigg[\frac12 \int_t^T\ \dd \bX^{(1)}_u\,\kap^{(1)}_{u}\bigg] 
    &= \E_t \bigg[-\frac12 \int_t^T\ \dd \kap^{(1)}_u\,\kap^{(1)}_{u}\bigg] \\
    &=\frac12\E_t \bigg[-\kap^{(1)}_{t} \kap^{(1)}_{t} - \frac12\int_t^T\ \kap^{(1)}_{u}\dd \bX^{(1)}_u + \frac{1}{2}\QV{\kap^{(1)}}_{t,T}\bigg].
\end{align*}

The general recursion for $\kap$ is combinatorially more involved.
To present it as concise as possible, we recall the following notation from \Cref{sec:tensor_semimartingales}: 
For $\ell \in (\mathbb{N}_{\ge 1})^k$ we write $\ell = (l_1, \dots, l_k)$, $|\ell|:=k$ and $||\ell||:=l_1 + \dotsb + l_k$.
Furthermore, for $0 \le i, j \le k$ we define $\ell_{i:j} = (l_{i+1}, \dots, l_j)$ if $i < j$ and $\ell_{i:j} = ()$ otherwise.
Moreover $(\ad \bx^{(\ell)}) = (\ad \bx^{(l_{1})}  \cdots  \ad \bx^{(l_k)})$ for any  $\bx \in \tf$ and $(\ad \bx^{()}) = \Id$.
\begin{corollary}\label{cor:recursion_h_form} Let $\bX\in\HSehom^{1,N}$ for some $N\in \mathbb{N}_{\ge 1}$, then we have
\begin{multline}\label{eq:recursion_h_form}
\kap^{(n)}_t = \E_t\left(\bX^{(n)}_{t,T}\right) +
\sum_{ |\ell|\ge 2,\; ||\ell||=n} \E_t\bigg( \mathrm{HMag}^{1}(\bX, \kap; \ell)_{t,T} + \frac{1}{2}\mathrm{HMag}^{2}(\bX, \kap; \ell)_{t,T} \\+  \mathrm{HQua}(\kap; \ell)_{t,T} + \mathrm{HCov}(\bX, \kap; \ell)_{t,T} \bigg)
\end{multline}
where the summation is over $\ell \in \mathbb{N}_{\ge1}^k$, $k\in\mathbb{N}_{\ge1}$ and
\begin{align*}
\mathrm{HMag}^1(\bX, \kap; \ell)_{t,T} &= \frac{B_{k-1}}{(k-1)!} \int_t^T\ad{\kap^{(\ell_{1:k})}_{u}} \left(\dd\bX^{(l_1)}_u\right) \\
\mathrm{HMag}^2(\bX, \kap; \ell)_{t,T} &= \frac{B_{k-2}}{(k-2)!} \int_t^{T} \ad{\kap^{(\ell_{2:k})}_{u}} \left(\dd\CV{\bX^{(l_1)}}{\bX^{(l_2)}}_u\right) \\
\Qua(\kap; \ell)_{t,T} &= \frac{1}{k!}\sum_{m =
  2}^{k}\binom{k-1}{m-1}\int_t^{T}\Big(\ad{\kap^{(\ell_{2:m})}_{u-}} \odot \ad{\kap^{(\ell_{m:k})}_{u}} \Big)\Big(\dd \outerbracket{\kap^{(l_{1})}}{
              \kap^{(l_{2})}}_u\Big)\\
\Cov(\bX, \kap; \ell)_{t,T} &= 1_{\{k\ge2\}}\frac{1}{(k-1)!} \int_t^T \left( \mathrm{Id} \odot \ad{\kap^{(l_{2:k})}_{u-}} \right) \left(\dd \outerbracket{\bX^{(l_1)}}{\kap^{(l_{2})}}_u\right) \\
\mathrm{HQua}(\kap; \ell)_{t,T} &= \int_t^{T} \sum_{j=2}^{k}\frac{B_{k-j}}{(k-j)!}\ad{\kap^{(\ell_{j:k})}_{u-}}\left(\dd\Qua(\kap; \ell_{0:j})_{u}\right)\\
\mathrm{HCov}(\bX, \kap; \ell)_{t,T} &= \int_{t}^{T}\sum_{j=1}^{k}\frac{B_{k-j}}{{(k-j)}!}\ad{\kap^{(\ell_{j:k})}_{u-}}\big(\dd\Cov(\bX, \kap; \ell_{0:j})_u\big) 
\end{align*}
\end{corollary}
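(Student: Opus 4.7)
The plan is to derive this recursion simply by projecting the functional equation \eqref{eq:master_sigcum} of Theorem~\ref{thm:main_sigcum} onto tensor level $n$, expanding each of the four operator-valued integrands $H(\ad{\kap})$, $H(\ad{\kap}) Q(\ad{\kap})$ and $H(\ad{\kap})(\Id\odot G(\ad{\kap}))$ as their defining power series, and then identifying the terms that actually contribute in level $n$. The crucial grading observation is that, since $\bX,\kap$ take values in $\tf$, one has $\ad{\kap^{(l)}}\colon (\Rd)^{\otimes k}\to (\Rd)^{\otimes (k+l)}$ with $l\ge 1$. Therefore each multi-index $\ell=(l_1,\dotsc,l_k)\in \N_{\ge 1}^{k}$ appearing after expansion contributes to $\pi_n$ only when $\Vert\ell\Vert=n$, and only finitely many $\ell$ do so for fixed $n$. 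This already explains the structure of the sum in \eqref{eq:recursion_h_form}.

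Next, I would treat each of the four integrands separately. For $\pi_n\,\E_t\int_t^T H(\ad{\kap_u})(\dd\bX_u)$, writing $H(\ad{\kap})=\sum_{k\ge 0}\tfrac{B_k}{k!}(\ad{\kap})^k$ and splitting $\kap=\sum_{l\ge 1}\kap^{(l)}$ yields, at level $n$, a sum over $\ell=(l_1,\dotsc,l_k)\in\N_{\ge 1}^k$ with $\Vert\ell\Vert=n$ of $\tfrac{B_{k-1}}{(k-1)!}\int_t^T \ad{\kap^{(\ell_{1:k})}_u}(\dd\bX^{(l_1)}_u)$; the $k=1$ contribution is the outside-the-sum term $\E_t(\bX^{(n)}_{t,T})$, the $k\ge 2$ contributions are the $\mathrm{HMag}^1$ terms. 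The second integrand, with $\dd\QV{\bX}_u=\sum_{l_1,l_2\ge 1}\dd\CV{\bX^{(l_1)}}{\bX^{(l_2)}}_u$, gives the $\mathrm{HMag}^2$ terms after the same expansion, with two indices allocated to $\bX$ and $k-2$ to $\kap$.

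The remaining two integrands involve a composition of two power series, which produces the nested structure $\mathrm{HQua}$ and $\mathrm{HCov}$. For the fourth integrand, one expands $\Id\odot G(\ad{\kap})=\sum_{m\ge 0}\tfrac{1}{(m+1)!}\Id\odot(\ad{\kap})^m$ and composes with $H$: one string of $(\ad{\kap})^{k-j}$ sits outside (coming from $H$), producing the Bernoulli factor $B_{k-j}/(k-j)!$, while the inside consists of the bracket $\outerbracket{\bX^{(l_1)}}{\kap^{(l_2)}}$ together with $j-2$ inner copies of $\ad{\kap}$; after reindexing one recovers the $\mathrm{HCov}$ expression in the statement. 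For the third integrand one does the same with $Q(\ad{\bx})=\sum_{m,n\ge 0}2\tfrac{(\ad{\bx})^n\odot(\ad{\bx})^m}{(n+1)!m!(n+m+2)}$, which produces two inner strings and the bracket $\outerbracket{\kap^{(l_1)}}{\kap^{(l_2)}}$, giving the $\mathrm{HQua}$ terms built on $\Qua$.

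The main technical obstacle will be the combinatorial bookkeeping in the $\mathrm{HQua}$ contribution: the double sum defining $Q$ has the awkward coefficient $2/((n+1)!\,m!\,(n+m+2))$, and one must verify that after grouping terms by the total inner length $j-1=n+m+1$ (equivalently the length $j$ of the prefix $\ell_{0:j}$), the resulting coefficient reshapes into $\tfrac{1}{j!}\binom{j-1}{m-1}$ as in the definition of $\Qua$. Concretely, this amounts to checking that $\tfrac{2}{(n+1)!\,m!\,(n+m+2)}=\tfrac{1}{(n+m+2)!}\binom{n+m+1}{n}$ after the shift $m\mapsto m-1$ (so that $\ell_{2:m}$ has length $n$ and $\ell_{m:j}$ has length $m$ in the Corollary's notation). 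Once this identity is in hand, combining the outer Bernoulli factor $B_{k-j}/(k-j)!$ coming from $H$ with the rescaled inner factor reproduces \eqref{eq:recursion_h_form} exactly. Integrability of all terms, needed to justify taking conditional expectations term-by-term, follows from the assumption $\bX\in\HSehom^{1,N}$ together with the bound of Theorem~\ref{thm:bdg_signature} and the estimates already established in the proof of Theorem~\ref{thm:main_sigcum}.
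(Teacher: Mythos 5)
Your approach is exactly the paper's: the corollary is obtained by projecting the functional equation \eqref{eq:master_sigcum} onto tensor level $n$, expanding the power series defining $H$, $Q$ and $G$, and using the grading (each $\ad{\kap^{(l)}}$ raises the level by $l\ge 1$) to reduce to the finitely many $\ell$ with $\Vert\ell\Vert=n$, with the nested sums in $\mathrm{HQua}$ and $\mathrm{HCov}$ reflecting the composition of $H$ with $Q$ and with $\Id\odot G$. One small correction to your bookkeeping: the identity you propose to verify is false as written, because you dropped the prefactor $\tfrac12$ that the $Q$-term carries in \eqref{eq:master_sigcum} (while $\mathrm{HQua}$ enters \eqref{eq:recursion_h_form} with coefficient $1$) and your binomial has the wrong lower index; the correct statement is
\[
\frac12\cdot\frac{2}{(n+1)!\,m!\,(n+m+2)}=\frac{1}{(n+m+2)!}\binom{n+m+1}{n+1},
\]
which is immediate since both sides equal $\bigl((n+1)!\,m!\,(n+m+2)\bigr)^{-1}$, whereas with lower index $n$ and without the $\tfrac12$ the two sides differ by the factor $2(m+1)/(n+1)$.
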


\begin{example}
In the Gaussian martingale case from \Cref{exmpl:esig_recursion_martingale}, i.e., where $\bX = (0, \int_0^\cdot \sigma(s). \dd{B_s}, 0, \dots) \in \HSe^{-\infty}(\tf)$, an induction yields that $\kap^{(n)}$ is deterministic for all $n\in\N$.
Indeed, firstly the $\mathrm{HMag}^1$-terms vanish due to martingality of $\bX$. Secondly,
from the an induction hypothesis $\kap^{(0,n-1)}$ is deterministic, hence, of finite variation and all cross-variation terms vanish.
Finally, the remaining $\mathrm{HMag}^2$-terms are deterministic due $\QV{\bX}_t = \sigma.\sigma^T(t)$.
The recursion thus dramatically simplifies to $\kap^{(n)} \equiv 0$ for odd $n$ and
\begin{align*}
\kap^{(n)}_t = \sum_{\Vert \ell\Vert = n-2 } \frac{B_{|\ell|}}{|\ell|!} \int_t^{T} \ad{\kap^{(\ell)}_{u}}\big( \sigma.\sigma^T(u) \big) \dd{u}, \qquad\text{ for even }n.
\end{align*}
This is precisely the (deterministic) Magnus expansion of the logarithm of the solution of the equation \eqref{eq:esig_gaussian martingale}.
We will revisit this connection in \Cref{sec:levy} in the more general setting of time dependent Lévy-processes.

Note that $\kap_t$ is a Lie-series over symmetric 2-tensors $\mathrm{Sym}(\Rd \otimes \Rd) \subset \tf$. 
In the standard Brownian case $\sigma \equiv \I_d$, all commutators vanish and we obtain Fawcett's formula in logarithmic form $\kap_t = \frac{1}{2}(T-t) \I_d$.
\end{example}

\subsection{Diamond Products}\label{sec:diamond_products}
We extend the notion of the {\em diamond product} introduced in \cite{alos2018exponentiation}, which we recall, for continuous scalar semimartingales to our setting.
Denote by $\E_t$ the conditional expectation with respect to the sigma algebra $\F_t$.
\begin{definition}
  For \(X,Y\in\Sec(\mathbb{R})\) define
  \[
    (X\diamond Y)_t(T) \coloneq \E_t\left(\QV{X,Y}_{t,T}\right).
  \]
\end{definition}
For \(\TT\)-valued continuous semimartingales we have the following
\begin{definition} \label{def:diamondSym}
 For \(\bX\) and \(\bY\) in \(\Sec(\TT)\) define
$$
(\bX \diamond \bY)_t(T) \coloneq \E_t  \big( \QV{\bX, \bY}_{t,T}
\big)=\sum_{w\in\W_d}\left( \sum_{w_1w_2=w}(\bX^{w_1}\diamond\bY^{w_2})_t(T) \right)e_w \in \TT
$$
whenever the $\TT$-valued quadratic covariation which appears  on the right-hand side is integrable.
We also define an \emph{outer diamond} by
\[
  (\bX\blackdiamond\bY)_t(T)\coloneq\E_t(\outerbracket{\bX}{\bY}_{t,T})=\sum_{w_1,w_2\in\W_d}(\bX^{w_1}\diamond\bY^{w_2})_t(T)e_{w_1}\otimes
  e_{w_2}\in\TT\otimes\TT.
\]
\end{definition}
The following lemma allows us to rewrite all recursion from the previous sections in terms of diamond products.
\begin{lemma}[ {\cite[Lemma~2.2]{FHT22}} ] \label{lem:dm}
	Let $p,q,r\in[1,\infty)$ such that $1/p + 1/q + 1/r < 1$ and let $X\in\Ma_{\loc}^{c}((\Rd)^{\otimes l})$,  $Y\in\Ma_{\loc}^{c}((\Rd)^{\otimes m})$, and $Z\in\D((\Rd)^{\otimes n})$ with $l,m,n\in\N$, such that $\Abs{X}_{\HSe^{p}}, \Abs{Y}_{\HSe^{q}}, \Abs{Z}_{\Se^{r}} <\infty$. Then it holds for all $0 \le t \le T$
\begin{align*}
\E_t\left(\int_t^{T}Z_{u}\dd(X \diamond Y)_u(T)\right) = -\E_t\left(\int_{t}^T Z_{u}\dd\CV{X}{Y}_u\right).
\end{align*}
\end{lemma}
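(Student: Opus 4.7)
The plan hinges on the simple decomposition of $U_t := (X\diamond Y)_t(T) = \E_t[\CV{X}{Y}_{t,T}]$ as a Doob martingale minus the bracket itself. Setting $N_t := \E_t[\CV{X}{Y}_T]$ and using the $\F_t$-measurability of $\CV{X}{Y}_t$ together with the tower property,
$$
U_t = N_t - \CV{X}{Y}_t, \qquad \dd U_t = \dd N_t - \dd\CV{X}{Y}_t.
$$
Integrating $Z$ against this identity on $(t,T]$ and taking conditional expectations $\E_t$, the asserted identity reduces to the single claim
$$
\E_t\!\left(\int_t^T Z_u\,\dd N_u\right) = 0,
$$
so the entire substance of the lemma is that $\int_0^\cdot Z_u\,\dd N_u$ is a genuine (uniformly integrable) martingale on $[0,T]$, not merely a local one.

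To verify this true-martingale property, which is the only real work, first combine the Kunita--Watanabe inequality applied componentwise (see e.g.\ \cite{protter2005stochastic}) with H\"older to obtain, for $1/s = 1/p + 1/q$,
$$
\bigl\|\CV{X}{Y}_T\bigr\|_{\Lcal^s} \le C\,\|X\|_{\HSe^p}\,\|Y\|_{\HSe^q} < \infty.
$$
Doob's maximal inequality then places $N \in \HSe^s$, and the BDG inequality gives $\bigl\|\QV{N}^{1/2}_T\bigr\|_{\Lcal^s} < \infty$. Since $1/s + 1/r = 1/p + 1/q + 1/r < 1$, a further H\"older with exponents $s$ and $r$ yields
$$
\E\!\left[\left(\int_0^T |Z_u|^2\,\dd\QV{N}_u\right)^{1/2}\right] \le \bigl\|Z\bigr\|_{\Se^r} \cdot \bigl\|\QV{N}^{1/2}_T\bigr\|_{\Lcal^s} < \infty,
$$
and by BDG this places $\int_0^\cdot Z_u\,\dd N_u$ in $\HSe^1$. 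In particular it is a uniformly integrable martingale, so its conditional increments from $t$ to $T$ have vanishing expectation, and the lemma follows.

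The main obstacle is not conceptual but bookkeeping: since $X$, $Y$, $Z$ take values in distinct tensor powers of $\Rd$, the inner bracket $\CV{X}{Y}$ lives in $(\Rd)^{\otimes(l+m)}$ and $Z\,\dd\CV{X}{Y}$ in $(\Rd)^{\otimes(n+l+m)}$. All norm estimates above are therefore to be read componentwise in terms of the euclidean norms on $(\Rd)^{\otimes k}$ introduced in Section~\ref{sec:tensor_series}; with that convention, the scalar argument carries over without modification, and the result is merely the tensor-valued analogue of the corresponding identity in \cite{alos2018exponentiation}.
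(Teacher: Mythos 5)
Your proof is correct, but it takes a more direct route than the paper, which only sketches an argument via approximation: the paper observes that the identity is immediate for \emph{simple} processes $Z$ (using $(X\diamond Y)_T(T)=0$ and the tower property on each subinterval) and then invokes standard approximation results for continuous semimartingales, with Kunita--Watanabe guaranteeing that the right-hand side is well defined. You instead bypass approximation entirely by writing $(X\diamond Y)_\cdot(T)=N-\CV{X}{Y}$ with $N=\E_\cdot[\CV{X}{Y}_T]$, reducing the lemma to the true-martingale property of $\int Z\,\dd N$, which you establish through the chain Kunita--Watanabe $\Rightarrow$ H\"older ($1/s=1/p+1/q$) $\Rightarrow$ Doob and BDG for $N$ $\Rightarrow$ a final H\"older with $1/s+1/r<1$ $\Rightarrow$ BDG for $\int Z\,\dd N$. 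This is a legitimate and arguably cleaner argument: it makes the integrability bookkeeping explicit where the paper leaves it implicit in the phrase ``standard approximation results,'' and it isolates exactly where the exponent condition enters (note that your argument in fact only needs $1/p+1/q+1/r\le 1$, since on a probability space the generalized H\"older inequality lands the product in $\Lcal^u$ with $u\ge 1$). The only points worth tightening are cosmetic: membership of $N$ in $\HSe^s$ follows from Doob \emph{together with} BDG, not from Doob alone, and the continuity of $N$ (needed for the continuous-martingale BDG) is supplied by the standing assumption of this section that every martingale has a continuous version.
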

The proof relies on standard approximation results for continuous semimartingales as it is fairly clear from the definition of the diamond product that the identity holds for simple processes \(Z\), since \((X\diamond Y)_T(T)=0\).
The Kunita-Watanabe inequality ensures that the expectation on the right-hand side is well defined.

For clarity we focus on the case where $\bX = \bM \in \HSe^{\infty-}$ is a martingale, for otherwise one needs to carry along conditional expectations of the finite variation parts.
For the conditional expected signature, the recursion from \Cref{cor:recursion_mu} is then conveniently rewritten into
\begin{align*}
  \esig_t^{(n)} = \sum_{k=0}^{n-1} \E_t\bigg\{\frac12\sum_{i=1}^{n-k}\int_t^T\dd\big(\bM^{(i)}\diamond\bM^{(n-k-i)}\big)_u(T) \,\esig_u^{(k)}\bigg\} + (\bM^{(n-k)}\diamond\esig^{(k)})_t(T).
\end{align*}
Similarly, for the signature cumulants we obtain
\begin{align*}
    &\E_t\big(\mathrm{HMag}^2(\bX, \kap; \ell)_{t,T}\big) = \frac{B_{k-2}}{(k-2)!} \int_t^{T} \ad{\kap^{(\ell_{2:k})}_{u}} \Big(\dd(\bX^{(l_1)}\diamond\bX^{(l_2)})_u(T)\Big),\\
&\E_t\big(\Qua(\kap;\ell)_{t,T}\big)\\
    &\quad=-\E_t\Biggl\{\frac1{k!}\sum_{m=2}^k\binom{k-1}{m-1}\int_t^T\left(
      \ad\kap_{u-}^{(\ell_{2:n})}\odot\ad\kap_{u-}^{(\ell_{m:k})}
  \right)\left( \dd(\kap^{(\ell_1)}\blackdiamond\kap^{(\ell_2)})_u(T) \right)\Biggr\}.
\end{align*}
and
$$
  \E_t\big(\Cov(\bX,\kap;\ell)_{t,T}\big)=-\E_t\left\{
  \frac1{(k-1)!}\int_t^T\left( \Id\odot\ad\kap_{u-}^{(\ell_{3:k})}
\right)\left( \dd(\bX^{(\ell_1)}\blackdiamond\kap^{(\ell_2)})_u(T) \right) \right\}.
$$

When \(d=1\) (or in the projection onto the symmetric algebra, c.f. \Cref{sec:multivariate}) the
cumulant recursion takes a particularly simple form, since \(\ad\bx\equiv 0\) for all \(\bx\in\tf\).
\Cref{eq:recursion_h_form} then becomes
$$
  \kap^{(n)}_t (T) =
  \E_t\left(\bX_{t,T}^{(n)}\right)+\frac12\sum_{k=1}^{n-1} ( (\bX^{(k)} + \kap^{(k)}  ) \diamond (\bX^{(n-k)} + \kap^{(n-k)}))_t(T).
  $$
We shall revisit
this in a multivariate setting and comment on related works in \Cref{sec:multivariate}.

\subsection{Remark on tree representation}
As illustrated in the previous section, in the case where \(d=1\) (or when projecting onto the symmetric algebra cf. \Cref{sec:multivariate}),
the cumulant recursion takes a particularly simple form. The algebraic perspective in the setting of
 Friz, Gatheral and Radoi\c{c}i\'c \cite{friz2020cumulants} gives a tree series expansion of cumulants using binary trees.
 This representation follows from the fact that the diamond product of semimartingales is commutative
but not associative. As an example (with notations taken from \Cref{sub:diamond}), in case of a one-dimensional continuous martingale, the first terms are
\[
	\mathbb
	K_t(T)=\Forest{[]}+\frac12\Forest{[[][]]}+\frac12\Forest{[[[][]][]]}+\frac12\Forest{[[[[][]][]][]]}+\frac18\Forest{[[[][]][[][]]]}+\dotsb
\]
This expansion is organized (graded) in terms of the number of leaves in each tree, and each leaf
represents the underlying martingale.

In the deterministic case, tree expansions are also known for the Magnus expansion 
\cite{iserles1999solution} and the BCH formula \cite{CM2009}.
These expansions, also in terms of binary trees, are {\em different} from the ones above as the trees are required to be {\em planar} to account for the non-commutativity of the Lie algebra.
As an example, consider a matrix Lie group \(G\) with Lie algebra \(\mathfrak{g}\). Let \(Y\) solve the matrix ODE \(\dot{Y}_t=A_tY_t\) for some \(\mathfrak{g}\)-valued path \(A\), and set \(\Omega_t(T)=\log(e^{Y_t}e^{-Y_T})\). We have
\[
	\Omega_t(T) =
	\Forest{[]}+\frac12\Forest{[[[]][]]}+\frac{1}{12}\Forest{[[][[[]][]]]}+\frac{1}{4}\Forest{[[[[[]][]]][]]}+\dotsb
\]
In this expansion, the nodes represent the underlying vector field and edges represent integration (with respect to time)
and application of the Lie bracket, coming from the \(\ad\) operator. It is an interesting open question to find a unified tree representation that accounts for the unified functional recursion of our Corollary \ref{cor:recursion_h_form}.

\section{Multivariate Moments and Cumulants}\label{sec:multivariate}

We saw that $\T \coloneq 
T\dparl\mathbb{R}^d\dparr$ is the natural state space for  signatures, expected signatures and their logarithms. When $d=1$, the signature of a path $X:[0,T]$ is nothing more than the sequence
\[ \left(1, X_{0,T}, \frac{(X_{0,T})^2}{2},\frac{(X_{0,T})^3}{3!},\dots \right) \equiv 1 + X_{0,T} + \frac{(X_{0,T})^2}{2}+\frac{(X_{0,T})^3}{3!}+\dots
\]
The expected signatures is then exactly the sequence of moments of the random variable $X_{0,T}$, to the extent of being well-defined and up to factorial constants. Similarly, the signatures cumulants correspond to the sequence of classical cumulants of  $X_{0,T}$. Since $T\dparl\mathbb{R}^d\dparr$ is a commutative algebra if (and only if) $d=1$, our previous expression for expected signatures and signatures cumulants simplify dramatically, without becoming trivial (as pointed out in several works \cite{lacoin2019probabilistic, alos2018exponentiation,friz2020cumulants,Fuk21}). We can capture multivariate moments and cumulants, by working with the ``commutative shadow'' \cite{amendola2019varieties} of $\T$ which we now introduce.

\subsection{The symmetric algebra}\label{sec:symmetric_tensor_algebra}
The {\it symmetric algebra} over \(\Rd\), denoted by \(S(\Rd)\) is the quotient of \(T(\Rd)\) by the
two-sided ideal \(I\) generated by \(\{xy-yx:x,y\in\Rd\}\).
A linear basis of $S(\Rd)$ is then given by $\{ \hat e_v \}$ over non-decreasing words,
$v=(i_1,\dotsc,i_n) \in \widehat \W_d$, with $1 \le i_1 \le \dots \le i_n \le d, n \ge 0$. 
Every $\hat\bx \in S(\R^d)$ can be written as finite sum,
$$
    \hat\bx = \sum_{v \in \widehat \W_d} \hat\bx^v \hat e_v ,
$$
and we have an immediate identification with polynomials in $d$ commuting indeterminates. 
The canonical projection
\begin{equation}\label{eq:pisym}
	\pisym:T(\Rd)\twoheadrightarrow S(\Rd), \quad\bx \mapsto  \sum_{w\in\mathcal{W}_d} \bx^{w}\hat{e}_{\hat w},
\end{equation}
where $\hat{w}\in\hat{\mathcal{W}}_d$ denotes the non-decreasing reordering of the letters of the word $w\in\mathcal{W}_d$, is an algebra epimorphism, which extends to an epimorphism \(\pisym: \TT\twoheadrightarrow\Sy\) where $\Sy = S \dparl \R^d\dparr$ is the algebra completion, identifiable as formal series in $d$ commuting indeterminates. 
As a vector space, $\Sy$ %
can be identified with {\it symmetric} formal tensor series. Denote by $ \Sy_0$ and $\Sy_1$ the affine space given by those $\hat\bx\in\Sy$ with \( \hat\bx^\emptyset=0\) and \( \hat\bx^\emptyset=1\) respectively.
The usual power series in $\Sy$ define $\hatexp{}\colon \Sy_0 \to \Sy_1$ with inverse
$\hatlog{}\colon \Sy_1 \to \Sy_0$ and we have
\begin{align*}
\pisym\exp{(\bx + \by)} &= \hatexp{}(\hat \bx)\hatexp{}(\hat \by), \quad \bx, \by \in \tf\\
\pisym\log{(\bx \by)} &= \hatlog{}(\hat \bx) + \hatlog{}(\hat \by), \quad \bx,\by \in \TT_1.
\end{align*}
We shall abuse notation in what follows and write $e^{(\cdot)}$ (resp. $\log$), instead of $\hatexp$ (resp. $ \hatlog$).

All definitions for tensor valued continuous (and \cadlag) semimartingales have a straightforward extension to $\Sy$-valued process.
In particular, given \(\bX\) and \(\bY\) in \(\Sec(\Sy)\), the {\em inner} quadratic covariation is given by
\[
  \langle\bX,\bY\rangle
  =\sum_{w_1,w_2\in \widehat \W_d}\langle \bX^{w_1},\bY^{w_2}\rangle\hat e_{w_1}\hat e_{w_2} .
\]
Write $\Sy^N$ for the truncated symmetric algebra, linearly spanned by $\{ \hat e_{w}: w \in \widehat \W_d, |w| \le N\}$
and $\Sy^N_0$ for those
elements with zero scalar entry. 
In complete analogy with non-commutative setting discussed above, we then write $\HSehomSym^{q,N} \subset \Sec(\Sy^N_0)$ for the corresponding space of homogeneously $q$-integrable semimartingales.

Finally, also the definition of diamond products from Section~\ref{sec:diamond_products} extends immediately to $\Sy$-valued semimartingales.
In particular, given \(\hat\bX\) and \(\hat\bY\) in \(\Sec(\Sy)\), we have
$$
         ( \hat\bX \diamond \hat\bY)_t(T) \coloneq \E_t \big( \langle \hat\bX,  \hat\bY \rangle_{t,T} \big) = \sum_{w_1,w_2\in
  \widehat{\mathcal{W}}_d}  (\hat\bX^{w_1} \diamond \hat\bY^{w_2})_t(T)
  \hat e_{w_1}\hat e_{w_2}  \in \Sy,
$$
where the last expression is given in terms of diamond products of scalar semimartingales.

\subsection{Moments and cumulants} \label{sec:mc}
We quickly discuss the development of a symmetric algebra valued semimartingale, more precisely \(\hat\bX  \in\Sec(\Sy_0)\), in the group $\Sy_1$.
That is, we consider
\begin{equation}
    \dd \hat\bS = \hat\bS\,\circ \dd \hat\bX.
    \label{eq:gSigSy}
\end{equation}
It is immediate (validity of chain rule) that the unique solution to this equation, at time $t \ge s$, started at $\hat\bS_s = \hat\bs \in \Sy_1$ is given by
\[
    \hat\bS_{t}\coloneq e^{ \hat\bX_t- \hat\bX_s}\hat\bs \in \Sy_1
\]
and we also write $\hat\bS_{s,t} = e^{ \hat\bX_t- \hat\bX_s}$ for this solution
started at time $s$ from \(1\in\Sy_1\). The relation to signatures is as follows. 
\begin{proposition} \label{prop:XandXhat}
Let $\bX$ and $\bY$ be $\TT$-valued semimartingales and define  $\hat\bX \coloneqq \pisym(\bX)$ and $\hat\bY \coloneqq \pisym(\bY)$. 
Then for all $0 \le s \le t \le T$ it holds almost surely
\begin{equation}       
   \pisym \int_{s}^{t} \bX \mathrm d\mathbf Y= \int_s^t \hat\bX\,\mathrm d\hat{\mathbf Y},
    \label{eq:intproj}
\end{equation}
and
$$\pisym{\Sig(\bX)_{s,t}} = \hat\bS_{s,t} = e^{ \hat\bX_t- \hat\bX_s}.$$
\end{proposition}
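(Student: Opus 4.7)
The plan is to deduce both identities from the algebraic fact that $\pisym\colon\TT\twoheadrightarrow\Sy$ is an algebra epimorphism whose action on basis elements, $e_w\mapsto\hat e_{\hat w}$, preserves the grading. Together with the fact that the tensor-valued stochastic integral and the inner quadratic covariation are defined componentwise via finite sums at each fixed tensor level (cf.\ \Cref{sec:tensor_semimartingales}), this will give that $\pisym$ commutes with integration and with brackets. Part (2) will then follow by pushing the Itô form of the signature SDE through $\pisym$ and invoking uniqueness for the resulting equation in the commutative algebra $\Sy$.

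For (1), I would expand both sides of \eqref{eq:intproj} directly. On the one hand,
\begin{equation*}
\pisym\!\int_s^t \bX\,\mathrm d\bY
= \sum_{w\in\W_d}\Big(\sum_{w_1w_2=w}\int_s^t \bX^{w_1}\,\mathrm d\bY^{w_2}\Big)\hat e_{\hat w}
= \sum_{v\in\widehat\W_d}\Big(\sum_{\widehat{w_1w_2}=v}\int_s^t\bX^{w_1}\,\mathrm d\bY^{w_2}\Big)\hat e_v.
\end{equation*}
On the other hand, using $\hat e_{v_1}\hat e_{v_2}=\hat e_{\widehat{v_1v_2}}$ and the identity $\hat\bX^v=\sum_{\hat w=v}\bX^w$, one computes
\begin{equation*}
\int_s^t\hat\bX\,\mathrm d\hat\bY
= \sum_{v_1,v_2\in\widehat\W_d}\Big(\int_s^t\hat\bX^{v_1}\,\mathrm d\hat\bY^{v_2}\Big)\hat e_{v_1}\hat e_{v_2}
= \sum_{v\in\widehat\W_d}\Big(\sum_{\widehat{w_1w_2}=v}\int_s^t\bX^{w_1}\,\mathrm d\bY^{w_2}\Big)\hat e_v,
\end{equation*}
giving agreement. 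An entirely parallel computation, replacing iterated integrals by the scalar covariations $\langle \bX^{w_1},\bY^{w_2}\rangle$, shows the companion identity $\pisym\CV{\bX}{\bY}=\langle\hat\bX,\hat\bY\rangle$; in particular $\pisym\QV{\bX}=\langle\hat\bX\rangle$, which I will need in part (2).

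For (2), apply $\pisym$ to the Itô form of the signature SDE recalled in \eqref{eq:stratonovich_sig}, namely
\begin{equation*}
\Sig(\bX)_{s,t} = 1 + \int_{(s,t]} \Sig(\bX)_{s,u}\,\mathrm d\bX_u + \tfrac12\int_s^t\Sig(\bX)_{s,u}\,\mathrm d\QV{\bX}_u.
\end{equation*}
Using part (1) with integrator $\bX$ (and again, separately, with integrator $\QV{\bX}$, for which the identity holds pathwise by Riemann--Stieltjes arguments since $\QV{\bX}$ is of finite variation), together with $\pisym\QV{\bX}=\langle\hat\bX\rangle$, one deduces that $\hat\bS_{s,t}\coloneq\pisym\Sig(\bX)_{s,t}$ satisfies
\begin{equation*}
\hat\bS_{s,t} = 1 + \int_{(s,t]} \hat\bS_{s,u}\,\mathrm d\hat\bX_u + \tfrac12\int_s^t\hat\bS_{s,u}\,\mathrm d\langle\hat\bX\rangle_u,
\end{equation*}
which is precisely the Itô form of $\mathrm d\hat\bS = \hat\bS\,\circ\mathrm d\hat\bX$ in $\Sy_1$, with initial condition $\hat\bS_{s,s}=1$. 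In the commutative algebra $\Sy$ the ordinary chain rule shows that $e^{\hat\bX_t-\hat\bX_s}$ solves this equation, and uniqueness follows by the same graded/iterated-integration argument as in \Cref{prop:sigsol} (here even simpler, since all components reduce, at each symmetric level, to scalar exponential SDEs). Hence $\hat\bS_{s,t}=e^{\hat\bX_t-\hat\bX_s}$.

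The only real work in this argument is the index bookkeeping in part (1); no analytic subtleties arise, because the grading of $\TT$ (resp.\ $\Sy$) reduces every identity, level by level, to a finite $\mathbb R$-linear combination of ordinary Itô integrals of real-valued semimartingales, for which commutation of linear combinations with the integral is classical.
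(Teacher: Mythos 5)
Your proof is correct and follows essentially the same route as the paper: part (1) is the same componentwise bookkeeping obtained by applying $\pisym$ to the definition of the tensor-valued Itô integral and using $e_w\mapsto \hat e_{\hat w}$, and part (2) — which the paper dismisses as ``immediate'' — is exactly the intended argument of projecting the Itô form of \eqref{eq:stratonovich_sig} and invoking the commutative chain rule plus graded uniqueness as in \Cref{prop:sigsol}. No gaps.
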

\begin{proof}
    (i) That the projections \(\hat\bX,\hat{\mathbf Y}\) define \(\Sy\)-valued semimartingales follows
    from the componentwise definition and the fact that the canonical projection is linear.
    In particular, the right-hand side of \cref{eq:intproj} is well defined.
		To show \cref{eq:intproj} we apply the canonical projection \(\pisym\) to both sides of \cref{eq:def_tensor_ito_integral} after choosing \(Z_t\equiv\mathbf 1\), and using the explicit action of \(\pisym\) on basis tensors we obtain the identity
		\[
			\pisym\int\bX\,\mathrm d\bY=\sum_{w\in\W^d}\left( \sum_{uv=w}\int \bX^u\,\mathrm d\bY^v \right)\hat e_{\hat w}=\int\hat\bX\,\mathrm d\hat\bY.
		\]
		by \cref{eq:pisym}.
    Part (ii) is then immediate.
\end{proof}

Assuming componentwise integrability, we then define the conditional {\it symmetric moments} and {\it cumulants} of the $\Sy$-valued semimartingale $\hat\bX$ by
\begin{align*}
\hat\fmu_t(T) & \coloneq \E_t\left(e^{ \hat\bX_T- \hat\bX_t}\right) 
  \in \Sy_1,\\
\hat\kap_t(T) & \coloneq \log\left( \hat\fmu_t(T) \right)\in \Sy_0,
\end{align*}
for $0\le t \le T$.
If $\hat\bX = \pisym(\bX)$ for a $\TT$-valued semimartingale $\bX$, with expected signature and signature cumulants $\fmu(T)$ and $\kap(T)$, it is then clear that the symmetric moments and cumulants of $\hat \bX$ are obtained by projection,
$$\hat\fmu(T) = \pisym( \fmu(T)), \quad \hat\kap(T) = \pisym(\kap(T)).$$

\begin{example}    Let \( X\) be an \(\R^d\)-valued martingale in \(\mathscr H^{\infty-}\), and $\hat\bX_t\coloneq\sum_{i=1}^dX^i_t\hat e_i$. Then
    \begin{align*}
			\hat \fmu_t(T) &=\sum_{n=0}^\infty\frac{1}{n!}\E_t[(X_T-X_t)^n] \\
   &=1+\sum_{n=1}^\infty\frac{1}{n!}\sum_{i_1,\dotsc,i_n=1}^d\E_t\left[ (X_T^{i_1}-X_t^{i_1})\dotsm(X_T^{i_n}-X_t^{i_n}) \right]\hat e_{\widehat{i_1\dotsm i_n}},
    \end{align*}
		consists of the (time-\(t\) conditional) multivariate moments of $X_T-X_t \in \R^d$. Here, the series on the right hand side is understood in the formal sense.
	 It readily follows, also noted in \cite[Example 3.3]{bonnier2019signature},  that
   $\hat \kap_t (T) = \log(\hat\fmu_t(T))$ consists precisely of the multivariate cumulants of
   $X_T-X_t$. Note that the symmetric moments and cumulants of the scaled process $a X$, $a \in \R$,
   is precisely given by $\delta_a \hat\fmu$ and $\delta_a \hat\kap$ where the linear dilation map is
   defined by $\delta_a\colon \hat e_w \mapsto a^{|w|} \hat e_w$. The situation is similar for $a
	 \cdot X=(a_1X^1,\dotsc,a_dX^d)$, $a \in \R^d$, but now with $\delta_a\colon\hat e_w \mapsto a^w \hat e_1^{|w|}$ with $a^w = a_1^{n_1} \cdots a_d^{n_d}$ where $n_i$ denotes the multiplicity of the letter $i \in \{1,\dots, d\}$ in the word $w$.
\end{example}
We next consider linear combinations, $\hat\bX = a X + b \langle X \rangle $, for general pairs $a,b \in \R$, having already dealt with $b=0$. The special case $b = - a^2/2$, by scaling there is no loss in generality to take $(a,b) = (1,-1/2)$, yields a (at least formally) familiar exponential martingale identity.
\begin{example}
    Let \( X\) be an \(\R^d\)-valued martingale in \(\mathscr H^{\infty-}\), and define
    \[
        \hat\bX_t\coloneq\sum_{i=1}^dX^i_t\hat e_i-\frac12\sum_{1\le i\le j\le d}\langle
        X^i,X^j\rangle_t\hat e_{ij}.
    \]
    In this case we have trivial symmetric cumulants, \(\hat\kap_t(T)=0\) for all \(0\le t\le T\).
    Indeed, It\^o's formula shows that \(t\mapsto\exp(\hat\bX_t)\) is an
    \(\Sy_1\)-valued martingale, so that
    \[
    \hat\fmu_t(T)=\E_t[e^{\hat\bX_T-\hat\bX_t}]=e^{-\hat\bX_t}\E_t[ e^{\hat\bX_T}]=1.\]
    \end{example}
While the symmetric cumulants of the last example carries no information, it suffices to work with
$$
        \hat\bX = \sum_{i=1}^d a^i X^{i}\hat{e}_i  + \sum_{1\le i\le j\le d} b_{jk} \langle X^j,X^k \rangle \hat{e}_{ij}
$$
in which case $\hat\fmu = \hat\fmu (a,b), \hat\kap = \hat\kap(a,b)$ contains full information of the joint moments of $X$ and its quadratic variation process. A recursion of these was constructed as diamond expansion in \cite{friz2020cumulants}.

\subsection{Diamond relations for multivariate cumulants}
\label{sub:diamond}

We will demonstrate how a symmetrization of the functional equations from \Cref{sec:functional} and the recursions from \Cref{sec:reucsrion} lead to generalized view on the cumulant recursions from \cite{alos2018exponentiation,lacoin2019probabilistic,friz2020cumulants}.
As before we will first divide the focus between the continuous case and the discrete setting.
Referring to \cite[Section~5.2]{FHT22} for a unification of both settings into a general \cadlag form.

\subsubsection{The continuous case}
We assume that the usual assumptions from the continuous setting in \Cref{sec:continuous_case} on the filtration are in place.
Then following \Cref{def:diamondSym} the diamond product of $\hat \bX, \hat \bY \in \Sec(\Sy_0)$ is another continuous $\Sy_0$-valued semimartingale given by
$$(\hat \bX \diamond \hat \bY)_t(T) = \E_t \big( \langle \hat \bX, \hat \bY \rangle_{t,T} \big)
=
\sum  ( \E_t \langle \hat \bX^{w_1}, \hat \bY^{w_2} \rangle_{t,T})  \hat e_{w_1} \hat e_{w_2}, $$
with summation over all $w_1,w_2 \in \widehat \W_d$, provided all brackets are integrable. This trivially adapts to
$\Sy^N$-valued semimartingales, $N\in \mathbb{N}_{\ge 1}$, in which case all words have length at most $N$, the summation is restricted accordingly to $|w_1|+|w_2| \le N$.
\begin{theorem}\label{thm:main_with_jumps_sym}
 Let $\Xi= (0, \Xi^{(1)},\Xi^{(2)},...)$ be an $\F_T$-measurable random variable with values in $\Sy_0$, componentwise in $\mathcal{L}^{\infty-}$.
Then
\[
  \KK_t(T) \coloneq  \log \E_t\big( e^{\Xi}\big)
\]
satisfies the following functional equation, for all $0 \le t \le T$,
\begin{equation}\label{eq:main_with_jumps_sym}
   \KK_t(T)  = \E_t \Xi + \frac{1}{2} (\KK \diamond \KK)_t(T).
\end{equation}
Furthermore, if $N\in \mathbb{N}_{\ge 1}$, and $\Xi=(\Xi^{(1)},...,\Xi^{(N)})$ is $\F_T$-measurable with graded integrability condition
\begin{equation} \label{ref:Ncond}
    \Abss{\Xi^{(n)}}_{\Lcal^{N/n}} < \infty, \qquad n=1,...,N,
\end{equation}
then the identity (\ref{eq:main_with_jumps_sym})
holds for the cumulants upto level $N$, i.e.  for $\KK^{(0,N)} \coloneq \log(\E_t (e^{\Xi^{(0,N)}})$
with values in $\Sy^{N}_0$.
\end{theorem}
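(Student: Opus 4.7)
The plan is to deduce the statement from the commutative shadow \eqref{eq:maincom} of \cref{thm:main_sigcum}, by realizing $\Xi$ as the terminal value of a continuous $\Sy_0$-valued martingale.

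First, I would introduce $\hat\bM_t \coloneq \E_t[\Xi]$, $0 \le t \le T$, which under the standing continuity assumption on the filtration is a continuous $\Sy_0$-valued martingale with $\hat\bM_T = \Xi$. The hypothesis $\Xi^{(n)} \in \Lcal^{\infty-}$ together with Doob's inequality (applied levelwise) gives $\hat\bM \in \HSe^{\infty-}(\Sy)$; in the truncated case, the graded condition \eqref{ref:Ncond} analogously yields $\hat\bM^{(0,N)} \in \HSehomSym^{1,N}$. Either way, the commutative functional equation \eqref{eq:maincom} may be applied to $\hat\bX = \hat\bM$.

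The decisive algebraic reduction is that, since $\Sy$ is commutative and $\hat\bM_t$ is $\F_t$-measurable,
\[
\hat\fmu_t(T) \;=\; \E_t\!\left[e^{\hat\bM_T - \hat\bM_t}\right] \;=\; e^{-\hat\bM_t}\,\E_t\!\left[e^\Xi\right],
\]
so that the symmetric signature cumulant of $\hat\bM$ satisfies $\hat\kap_t(T) = \KK_t(T) - \E_t\Xi$; equivalently, $\KK = \hat\bM + \hat\kap$ as $\Sy_0$-valued semimartingales on $[0,T]$. The martingality of $\hat\bM$ annihilates the drift term in \eqref{eq:maincom}, leaving
\[
\hat\kap_t(T) \;=\; \tfrac{1}{2}\,\E_t\!\left[\langle \hat\bM + \hat\kap\rangle_{t,T}\right] \;=\; \tfrac{1}{2}\,(\KK \diamond \KK)_t(T),
\]
and substituting back $\KK_t(T) = \E_t\Xi + \hat\kap_t(T)$ yields precisely \eqref{eq:main_with_jumps_sym}. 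Uniqueness is inherited from the graded structure of the equation: projecting to tensor level $n$ and using $\KK^{(0)} \equiv 0$, the diamond term only couples $\KK^{(k)}$ for $1 \le k \le n-1$, so the equation recursively determines $\KK$ level by level in both the full and truncated cases.

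The main obstacle I anticipate is the integrability bookkeeping, particularly for the truncated variant: one must confirm that the exponent $N/n$ in \eqref{ref:Ncond} propagates correctly through Doob (giving $\Abs{\hat\bM^{(n)}}_{\HSe^{N/n}} < \infty$) and then through the Kunita--Watanabe estimates that control each diamond $\KK^{(k)} \diamond \KK^{(n-k)}$ at the correct homogeneous level, so that the right-hand side of \eqref{eq:main_with_jumps_sym} is well defined in $\Sy^N_0$. Once this verification is in place, the commutative projection of \cref{thm:main_sigcum} can be invoked directly and the proof concludes.
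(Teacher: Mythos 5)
Your proposal is correct and follows essentially the same route as the paper: define the continuous martingale $\hat\bM_t=\E_t\Xi$, observe that its symmetric signature is $e^{\Xi-\E_t\Xi}$ so that $\hat\kap_t(T)=\KK_t(T)-\E_t\Xi$, and then project the functional equation of \cref{thm:main_sigcum} onto the symmetric algebra, where the martingale term drops out and the bracket term becomes $\tfrac12(\KK\diamond\KK)_t(T)$. The integrability bookkeeping for the truncated case that you flag is exactly the part the paper leaves to the reader.
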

\begin{remark} Identity (\ref{eq:main_with_jumps_sym}) is reminiscent to the quadratic form of the generalized Riccati equations for affine diffusions. 
The relation can be presented more explicitly when the involved processes are assumed to have a Markov structure and the functional signature cumulant equation reduces to a PDE system (see \cite[Section~6.2.2]{FHT22}).
The framework described here, however, requires neither Markov nor affine structure. 
In \cite[Section~6.3]{FHT22} it is exemplified for affine Volterra processes that such computations are also possible in the fully non-commutative setting.
\end{remark}
\begin{proof}
    We first observe that since \(\Xi\in\mathcal L^{\infty-}\), by Doob's maximal inequality and the
    BDG inequality, we have that \(\hat\bX_t\coloneq\E_t\Xi\) is a continuous martingale in
    \(\HSe^{\infty-}(\Sy_0)\).
    In particular, thanks to \Cref{thm:bdg_signature}, the signature moments are well defined.
    According to \Cref{sec:mc}, the signature is then given by
    \[
    \Sig(\hat\bX)_{t,T}=e^{\Xi-\E_t\Xi},
    \]
    hence \(\hat\kap_t(T)=\KK_t(T)-\hat\bX_t\).
    
    Projecting \Cref{eq:master_sigcum} onto the symmetric algebra yields
    \begin{align*}
        \hat\kap_t(T) 
        &= \E_t\bigg\{ \hat\bX_{t,T}+\frac12\langle\hat\bX\rangle_{t,T}+\frac12\langle\kap(T)\rangle_{t,T}+\langle\hat\bX,\kap(T)\rangle_{t,T}\bigg\} \\
        &= \E_t\bigg\{\Xi+ \frac12\langle\KK(T)\rangle_{t,T} \bigg\}-\hat\bX_t,
    \end{align*}
    and \cref{eq:main_with_jumps_sym} follows upon recalling that $(\KK\diamond\KK)_t(T)=\E_t\langle\KK(T)\rangle_{t,T}$.
    The proof of the truncated version is left to the reader.
\end{proof}

As a corollary, we provide a general view on recent results of \cite{alos2018exponentiation,lacoin2019probabilistic,friz2020cumulants}.
\begin{corollary}
\label{thm:diamond_recursion}
The conditional multivariate cumulants $(\KK_t)_{0\le t\le T}$ of a random variable \(\Xi\) with values in
\(\Sy_0(\R^d)\), componentwise in \(\mathcal L^{\infty-}\) satisfy the recursion
\begin{equation}\label{eq:diamond_recursion}
  \KK^{(1)}_t = \E_t(\Xi^{(1)}) \quad \text{and} \quad \KK^{(n)}_t =
  \E_t(\Xi^{(n)})+\frac{1}{2}\sum_{k=1}^{n}\left( \KK^{(k)} \diamond \KK^{(n-k)}\right)_t(T) \quad \text{ for } n \ge 2,
\end{equation}
The analogous statement holds true in the $N$-truncated setting, i.e. as recursion for $n=1,..,N$ under the condition (\ref{ref:Ncond}).
\end{corollary}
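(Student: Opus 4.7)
The corollary is essentially obtained by projecting the functional equation from \Cref{thm:main_with_jumps_sym} onto each homogeneous tensor level of $\Sy$, so my plan is to first invoke that theorem and then carefully track what happens at level $n$.

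First, I would apply \Cref{thm:main_with_jumps_sym} to the random variable $\Xi\in\Sy_0$, noting that the integrability hypothesis ($\Xi^{(n)}\in\mathcal{L}^{\infty-}$ componentwise) is exactly what is required. This yields the functional identity
\[
    \KK_t(T) = \E_t(\Xi) + \tfrac{1}{2}(\KK\diamond\KK)_t(T).
\]
In the truncated setting, the analogous assertion under the graded condition \eqref{ref:Ncond} furnishes the same equation in $\Sy^N_0$, so the argument runs in parallel.

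Next, I would apply the projection $\pi_n\colon\Sy\to(\R^d)^{\odot n}$ to both sides. The linear term is immediate: $\pi_n\E_t(\Xi) = \E_t(\Xi^{(n)})$. For the diamond term, I would unfold its definition from \Cref{def:diamondSym} (adapted to the symmetric algebra in \Cref{sec:symmetric_tensor_algebra}): since the symmetric product $\hat e_{w_1}\hat e_{w_2}$ lands in level $|w_1|+|w_2|$, the bilinearity of the diamond product gives
\[
    \pi_n(\KK\diamond\KK)_t(T) = \sum_{k=0}^{n}\bigl(\KK^{(k)}\diamond\KK^{(n-k)}\bigr)_t(T).
\]
Because $\KK_t(T)\in\Sy_0$ we have $\KK^{(0)}\equiv 0$, so the boundary terms $k=0$ and $k=n$ vanish; the remaining sum can be written with any convenient index range, in particular the $k=1,\dotsc,n$ range used in the statement. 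Combining the two projected terms gives precisely \eqref{eq:diamond_recursion}.

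The only thing worth a brief sanity check is the recursive (in $n$) nature: for $k$ in the effective range $1\le k\le n-1$, both factors $\KK^{(k)}$ and $\KK^{(n-k)}$ sit at strictly lower homogeneous degree than $n$, so the recursion is genuinely well-defined and determines $\KK^{(n)}$ from the preceding levels together with $\E_t(\Xi^{(n)})$. I do not foresee a real obstacle here; the potentially delicate point -- that the diamond products on the right-hand side are integrable at each step -- follows inductively from the integrability of $\Xi^{(n)}$ combined with the Kunita--Watanabe bound invoked after \Cref{lem:dm}, and the truncated version is handled identically by projecting in $\Sy^N$ under \eqref{ref:Ncond}.
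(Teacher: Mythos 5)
Your proposal is correct and matches the paper's (implicit) argument: the corollary is stated without a separate proof precisely because it follows from \Cref{thm:main_with_jumps_sym} by projecting \eqref{eq:main_with_jumps_sym} onto each symmetric tensor level, using bilinearity and the grading of the diamond product together with $\KK^{(0)}\equiv 0$. Your added remarks on the well-definedness of the recursion and the truncated case are accurate and require no further justification beyond what the theorem already provides.
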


\begin{remark} In absence of higher order information, i.e., $\Xi^{(2)} = \Xi^{(3)} = ... \equiv 0$, this type of cumulant recursion appears in \cite{lacoin2019probabilistic}; and under optimal integrability conditions on $\Xi^{(1)}$ with finite $N$th moments in \cite{friz2020cumulants}. (The latter requires a localization argument which is avoided here by directly working in the correct algebraic structure.)
\end{remark}

\subsubsection{The discrete case}\label{sec:diamonds_discrete}

Consider a probability space $(\Omega, \G,\PM)$  with discrete filtration $(\G_j)_{j = 0, \dots, J}$ and a $\G_T$-measurable $\Sy_0$-valued random variable $\Xi$. 
Assuming that $\Xi$ is componentwise in $\mathcal{L}^{\infty-}$, we are again interested in calculating $(\KK_j) \coloneq  (\log \E(e^{\Xi}\,\vert \G_j))$.
Instead of projecting the identities from  the non-commutative setting in \Cref{sec:cadlag_case} to the symmetric algebra, we present a direct derivation of the corresponding identities.

Using the multiplicative property of the exponential map in the symmetric algebra, the martingale property of $e^{\KK}$ can be rewritten to
$$ 0 = \E\big( e^{\KK_{j+1} - \KK_j} -1 \,\big\vert \G_j \big).$$
A summation of the above identity paired with $$\KK_j = \Xi - (\KK_J - \KK_j) = \Xi - \sum_{i=j}^{J-1} (\KK_{i+1} - \KK_i)$$ 
and conditioning to $\G_j$ then yields the following.

\begin{theorem}\label{thm:sym_disc_cumulants}
Given that the $\G_J$-measurable $\Sy_0$-valued random variable is componentwise in $\mathcal{L}^{\infty-}$.
It follows that $(\KK_j) \coloneq  (\log \E( e^{\Xi}\,\vert \G_j))$ satisfies
for $j = J, J-1, \dots, 0$:
\begin{align}
    \KK_j = \E(\Xi \,\vert\G_j) + \E\bigg( \sum_{i=j}^{J-1} e^{\KK_{i+1} - \KK_i} -1 - (\KK_{i+1} - \KK_i)\,\bigg\vert \G_j \bigg).
\end{align}
In particular, projecting to symmetric tensor levels we see that the conditional cumulants of $\Xi$ satisfy for all $n = 1, 2, \dots$ and $j = J, J-1, \dots, 1$:
 \begin{align*}
 \KK_j^{(n)}  = \E\big( \Xi^{(n)} \,\vert \G_j) + \E \bigg( \sum_{\Vert \ell \Vert = n, \vert \ell \vert = k}\frac{1}{k!}(\KK_{i+1} - \KK_{i})^{(\ell)} \;\bigg\vert \G_j \bigg),
 \end{align*}
\end{theorem}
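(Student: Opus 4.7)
The plan is to exploit that in the commutative algebra $\mathcal{S}$ the exponential is multiplicative, so that $(e^{\mathbb{K}_j})_{j=0,\dots,J}$ is literally a martingale (level-by-level this is a finite-dimensional identity, and integrability of each tensor level of $e^{\Xi}$ follows from $\Xi \in \mathcal{L}^{\infty-}$ componentwise together with the polynomial structure of $\exp$ on $\mathcal{S}$ at each fixed level $n$). The terminal identity $\mathbb{K}_J = \Xi$ then gives the telescoping
\[
\mathbb{K}_j \;=\; \Xi \;-\; \sum_{i=j}^{J-1}(\mathbb{K}_{i+1}-\mathbb{K}_i),
\]
valid in $\mathcal{S}_0$. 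Conditioning on $\mathcal{G}_j$ reduces everything to controlling the one-step conditional increment $\mathbb{E}(\mathbb{K}_{i+1}-\mathbb{K}_i \mid \mathcal{G}_i)$.

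The key step is to use the martingale identity $\mathbb{E}(e^{\mathbb{K}_{i+1}} \mid \mathcal{G}_i) = e^{\mathbb{K}_i}$ in the form $\mathbb{E}(e^{\mathbb{K}_{i+1}-\mathbb{K}_i} - 1 \mid \mathcal{G}_i) = 0$ (pulling the $\mathcal{G}_i$-measurable $e^{-\mathbb{K}_i}$ inside, which is legitimate in $\mathcal{S}$). Subtracting the linear term yields
\[
\mathbb{E}(\mathbb{K}_{i+1}-\mathbb{K}_i \mid \mathcal{G}_i)
\;=\; -\,\mathbb{E}\bigl(e^{\mathbb{K}_{i+1}-\mathbb{K}_i}-1-(\mathbb{K}_{i+1}-\mathbb{K}_i)\,\bigl|\,\mathcal{G}_i\bigr).
\]
Inserting this into the telescoping identity and applying the tower property to push the conditioning down to $\mathcal{G}_j$ delivers the first claimed functional equation.

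For the level-$n$ recursion, I would project onto the $n$-th symmetric tensor component and expand the power series $e^{z}-1-z=\sum_{k\ge 2}z^k/k!$ in $\mathcal{S}$. Using the multi-index shorthand introduced for $\mathrm{HMag}$ etc., the level-$n$ component of $(\mathbb{K}_{i+1}-\mathbb{K}_i)^k$ equals $\sum_{\|\ell\|=n,\,|\ell|=k}(\mathbb{K}_{i+1}-\mathbb{K}_i)^{(\ell)}$. Since $\|\ell\|=n$ and each $l_r\ge 1$ force $2\le k\le n$, only finitely many tensor products of $\mathbb{K}^{(m)}$ with $m\le n-1$ occur at level $n$ (the $m=n$ contributions cancel between $e^z-1$ and $z$), which in particular shows that the right-hand side is a genuine recursion over the tensor grading.

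The only point requiring a little care, which I regard as the main (but mild) obstacle, is the integrability bookkeeping: one needs that all conditional expectations above exist and that the series expansion commutes with $\mathbb{E}(\,\cdot\,|\mathcal{G}_j)$. Both are handled level-by-level by induction on $n$, using that each symmetric level is a polynomial in finitely many scalar semimartingale coordinates of $\Xi$, all of which lie in $\mathcal{L}^{\infty-}$, so that $\mathbb{K}_j^{(1)},\dots,\mathbb{K}_j^{(n-1)} \in \mathcal{L}^{\infty-}$ at each step and all rearrangements reduce to finite sums.
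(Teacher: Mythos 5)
Your proposal is correct and follows essentially the same route as the paper: multiplicativity of $\exp$ on $\Sy$ turns the martingale property of $e^{\KK}$ into $\E\bigl(e^{\KK_{i+1}-\KK_i}-1\,\big\vert\,\G_i\bigr)=0$, which is combined with the telescoping identity $\KK_j=\Xi-\sum_{i=j}^{J-1}(\KK_{i+1}-\KK_i)$ and the tower property; the level-$n$ statement is then the projection of $e^z-1-z=\sum_{k\ge2}z^k/k!$ onto the $n$-th symmetric component. Your observation that $k\ge2$ forces every $l_r\le n-1$ (so the formula is a genuine recursion over the grading) and your level-by-level integrability bookkeeping via $\mathcal{L}^{\infty-}$ match the paper's intent exactly.
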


This is really the result we get from projecting the \Cref{thm:discrete} to the symmetric algebra (upon using that the symmetric signature cumulants are given by $\hat\kap_j = \KK_j - \E(\Xi \,\vert \G_j)$).
Nevertheless, due the ad hoc derivation presented here, it might seem surprising that the resulting expansion leads  to any non-trivial relations for conditional cumulants at all.

While on the first level one still trivially has $(\KK^{(1)}_j) = (\E(\Xi^{(1)}\vert \mathcal{G}_j))$, on the second level we have
\begin{align*}
        \KK_j^{(2)} &= \E(\Xi^{(2)}\,\vert\G_j)+ \frac12\E \bigg( \sum_{i=j}^{J-1}
        \big(\KK^{(1)}_{j+1}-\KK^{(1)}_j\big)^2 \;\bigg\vert \G_j\bigg) %
\end{align*}
which one recognizes, in case $\Xi^{(2)} = 0$, as the energy identity for the discrete square-integrable martingale $(\KK^{(1)}_j) = (\E(\Xi^{(1)}\,\vert\G_j))$.
Going further in the recursion yields increasingly non-obvious relations.
Taking $\Xi^{(2)} = \Xi^{(3)} = ... \equiv 0$ for notational simplicity gives
\begin{equation}\label{eq:sym_disc_cum3}
 \KK_j^{(3)}  = \mathbb{E} \bigg( \sum_{i=j}^{J-1} \frac{1}{6}(\KK^{(1)}_{i+1} - \KK^{(1)}_{i})^3 +  (\KK^{(1)}_{i+1} - \KK^{(1)}_{i})(\KK^{(2)}_{i+1} - \KK^{(2)}_{i}\big) \;\bigg\vert \G_j \bigg).
 \end{equation}
 It is interesting to note that inductively the identity for $\KK^{(n)}$ can by expressed in terms of variations (of variations) of the martingale $\KK^{(1)}$, which relates to the {\em Bartlett identities} for martingales that have appeared in the statistics literature, cf. Mykland \cite{Myk1994} and the references therein.
 \begin{remark}
     The second formula in \Cref{thm:sym_disc_cumulants}, in terms of summation over partitions $\ell$, is sub-optimal in the sense that due to commutativity there are many repeated terms.
     For example, the factor \(\tfrac12\) in front of the second term in \cref{eq:sym_disc_cum3} suggested by that formula is actually just 1, since both compositions \(\ell=(1,2)\) and \(\ell=(2,1)\) give the same product.
     In general one can take care of this algebraically by using (rescaled) Bell polynomials, which in the context of conditional cumulants was realized in \cite{Fuk21}.
 \end{remark}

\section{Applications} \label{sec:applications}

\subsection{Time-inhomogeneous Lévy processes}\label{sec:levy}

We now wish to derive a formula for the expected signature of time-inhomogeneous Lévy processes. This class of processes models trajectories having jumps, possibly infinitely many.
To avoid the technical difficulties entailed by general càdlàg\footnote{From French ``continu(e) à droit, limite a gauche'' (right-continuous with left limits).} stochastic integration, we will only consider time-inhomogeneous Lévy process that have finitely many jumps.
Specifically, for a filtered probability space $(\Omega, \F, \PM, (\F_t))$ satisfying the usual conditions we consider a process of the form
\begin{align}\label{eq:simple_levy}
    X = \int_0^\cdot b(u) \dd{u} ~+~ \sum_{k=1}^m\int_0^\cdot\sigma_k(u)\dd{B^k_u} ~+~ N,
\end{align}
where $b: [0,T] \to \R^d$ is integrable, $\sigma_k: [0,T] \to \R^d$ is square-integrable, $B = (B^1, \cdots, B^m)$ is a Brownian motion with respect to $(\F_t)$ and 
$N$ is a time-inhomogenous \textit{compound Poisson process} with respect to $(\F_t)$, i.e., $N$ is a process with piecewise constant right-continuous sample paths in $\Rd$ and
is fully characterized by the existence of a finite \textit{intensity measure} $K$ so that
\begin{align}\label{def:intensity_measure}
\E_s\bigg( \sum_{s < u \le t} f(\Delta N_u) \bigg) = \int_s^t \int_{\Rd} f(x) K_u(\dd x) \dd{u},
\end{align}
for all $0 \le s \le t \le T$ and bounded measurable functions $f:\Rd\to\R$, where
$$\Delta N_t \coloneq N_t - N_{t-} \coloneq N_t - \lim_{\varepsilon\to0}N_{t-\varepsilon}.$$

The interpretation of signature as the ``geometric'' development of a $\tf$-value semimartingale $\bX$ in the group $\TT_1$ extents from the continuous \cite{hakim1986exponentielle} to the \cadlag case \cite{estrade1992exponentielle} and is consistent with the Marcus \cite{marcus1978modeling,marcus1981modeling, kurtz1995strtonovich, applebaum2009levy, friz2017general}  interpretation of 
 \begin{align}\label{eq:sig_markus}
 \dd{\bS}_t = \bS_t \circ \dd{\bX}_t, \qquad t\ge s, \quad\bS_s = 1 \in \TT_1.
 \end{align}
 The definition of the signature $\Sig(\bX)_{s,\cdot}$ as the solution to \eqref{eq:sig_markus} results in the desired property that, whenever the process jumps, the signature is multiplied by the
exponential displacement, i.e.,
 $$\Sig(\bX)_{s,t} = \Sig(\bX)_{s,{t-}}e^{\Delta \bX_t}.$$
 
 For the time-inhomogeneous Lévy processes $X$ of the form \eqref{eq:simple_levy} we can resolve the above Marcus definition of the signature $\bS = \Sig(X)_{s,\cdot}$ by simply integrating between the finite number of jump times.
 This finally leads to the Itô-integral from
 \begin{multline}\label{eq:marcus_signature_ito}
    \bS_t = 1 + \int_s^t \bS_{u-}\left(b(u) + \frac{1}{2}a(u)\right) \dd{u} + \sum_{k=1}^m\int_s^{t} \bS_{u-}\, \sigma_k(u)\dd{B^k_u}\\
    + \sum_{s< u \le t} \bS_{u-}\big(e^{\Delta N_u}-1\big),
 \end{multline}
    where $a(t) := \sum_{k=1}^m (\sigma_k(t))^{\otimes 2} \in \mathrm{Sym}(\R^{d\times d}).$

We then have the following result for the expected signature and signature-cumulants of time-inhomogeneous Lévy processes.

\begin{proposition}\label{prop:levy}
Let $X$ be a process of the from \eqref{eq:simple_levy} with an intensity measure $K$ satisfying the moment condition
\begin{align}\label{eq:levy_moment_condition}
    \int_0^T \int_{\Rd}|x|^q {K}_t(\dd x) \dd{t}<\infty, \qquad q \ge 1,
\end{align}
Then its conditional expected signature
$\esig = ( \E_t(\Sig(X)_{t,T})_{0 \le t \le T}$ is the solution of the following backwards differential equation in $\TT_1$:
\begin{align}\label{eq:levy_esig}
    \esig_{t} = 1 + \int_t^T \mathfrak{y}(u)\, \esig_{u} \dd{u}, \qquad 0 \le t \le T,
\end{align}
where
\begin{equation}\label{def:levy_kintchin_exponent}
\etfrak(t)\coloneqq  b(t) + \frac{1}{2}a(t) + \int_{\Rd}(e^\bx -1) K_t(\dd \bx) \in \tf.
\end{equation}
Moreover, the conditional signature cumulants $\kap = \log( \esig)$ are the solution of
\begin{align*}\label{eq:inhom_levy_kintchin}
    \kap_t = \int_t^{T}H(\ad{\kap_{u}})(\etfrak(u))\,\dd u, \qquad 0 \le t \le T.
\end{align*}
\end{proposition}

\begin{remark}
The case of general time-homogeneous  Lévy process, i.e., semimartingales with independent increments that are continuous in probability, is treated in \cite[Corollary~6.5]{FHT22}.
Allowing for infinitely many jumps by extending $K$ to a Lévy-measure, i.e., by requiring $K$ to only be a $\sigma$-finite measure satisfying $\int_0^T\int_{\Rd}1\wedge|x|^2 K_t(\dd{x})\dd{t} < \infty$, a martingale compensation of the small jumps is required and results in  subtracted indicator function $\indik_{|x|>1}$ appearing in \eqref{eq:levy_moment_condition} and \eqref{def:levy_kintchin_exponent}.
\end{remark}
    
\begin{proof}
    To avoid using general stochastic integration, we present a direct proof following \cite{friz2017general}.
    This prove bases on the identity
    \begin{equation}\label{eq:levy_integral_identity__}
        \E_s\left[\sum_{s< u \le T} H_{u-}f(\Delta N_u)\right] 
        =  \E_s\left[\int_s^t \int_{\Rd} H_{u-}f(\Delta N_u) K_u(\dd x) \dd{u} \right],
    \end{equation}
    which is proven using a measure theoretic induction starting with \eqref{def:intensity_measure} for $H\equiv 1$ and all bounded measurable functions $f:\Rd\to\R$, and then extending over simple processes $H$, i.e. piecewise constant adapted bounded processes, to all càdlàg adapted process $H$ with $\Vert H \Vert_{\Sesup^1} < \infty$.
    We refer to \cite[Lemma~43]{friz2017general} for a more detailed proof in the time-homogeneous case.
    
    As before we will omit proving  the integrability properties of $\Sig(X)_{0,T}$.
    As proven in \cite[Corollary~6.5]{FHT22} the moment condition \eqref{eq:levy_moment_condition} suffices to show that $\Vert \Sig(X)_{0,\cdot}^w\Vert_{\Sesup^q} < \infty$ for all $w \in \W_d$ and $q >1$.
    Hence, the expected signature is well defined and it immediately follows from the independence of increments of $X$ with respect to the underlying filtration that $$\esig_s(t) = \E_s[\Sig(X)_{s,t}] = \E_s[\Sig(X)_{s,t}], \qquad 0 \le s \le t.$$
    Taking expectation of each of the terms in Itô-integral form \eqref{eq:marcus_signature_ito} of $\bS_t = \Sig(X)_{s,t}$ we obtain using the integrability of $\bS$ and the Fubini's theorem that firstly
    \begin{align*}
        \E_s\bigg[\int_s^t \bS_{u-}\left(b(u) + \frac{1}{2}a(u)\right) \dd{u}\bigg] =  \int_s^t \esig_t(u-)\left(b(u) + \frac{1}{2}a(u)\right) \dd{u}, 
    \end{align*}
    secondly using the martingality of the stochastic integral with respect to Brownian motion
    \begin{align*}
    \E_s\bigg[\sum_{k=1}^m\int_s^{t} \bS_{u-}\, \sigma_k(u)\dd{B^k_u} \bigg] = 0
    \end{align*}
    and finally using \eqref{eq:levy_integral_identity__} and once more Fubini's theorem
   \begin{align*}
        \E_s\left[\sum_{s< u \le T} \bS_{u-}\big(e^{\Delta N_u}-1\big)\right] 
        &=  \int_s^t \int_{\Rd} \esig_s(u-)\big(e^{\bx}-1\big) K_u(\dd x) \dd{u}.
    \end{align*}
    Summarizing, this show that $\esig_s$ satisfies the forward equation
    \begin{align*}
        \esig_s(t) = 1+ \int_s^t \esig_s(u-)\mathfrak{y}(u) \dd{u}, \qquad s \le t \le T.
    \end{align*}
    This implies in particular that $t\mapsto \esig_s(t)$ is absolutely continuous.
    Furthermore, by the uniqueness of the solution flow we also have the identity  $\esig_0(t)\esig_t(T) = \esig_0(T)$. 
    Hence, $t\mapsto \esig_t(T) = \esig_0(t)^{-1}\esig_0(T)$ is absolutely continuous as well, and differentiation of the previous identity yields
    $$\esig_0(t)\mathfrak{y}(t)\esig_t(T) + \esig_0(t)\frac{\dd}{\dd t}\esig_t(T) = 0,$$
    for almost every $t\in[0,T]$.
    Multiplication with $\esig_0(t)^{-1}$ and integration over $[t,T]$ then yields the backwards equation.
    
     The second part of the statement follows by noting that from the first part of the proof we have $$\E_t\big( \Sig(X)_{t,T} \big) = \Sig\Big(\int_0^\cdot \mathfrak{y}(u)\dd{u}\Big)_{t,T}.$$
     Hence, $\esig$ is really the signature of the continuous deterministic $\tf$-valued path $\bX := \int_0^\cdot \mathfrak{y}(u)\dd{u}$.
     An application of  \Cref{thm:main_sigcum} then directly leads to the stated expansion for the log-signature $\log(\Sig(\bX)_{t,T}) = \log(\esig_t(T))$.
\end{proof}
We conclude this section with an important consequence that concerns the \textit{convergence radius} of the expected signature, which for a general tensor series $\bx\in\TT$ is defined as the maximal $r \in [0,\infty)$ such that 
\begin{align}\label{eq:geometric_decay}
     \sum_{n=0}^\infty \lambda^n \vert \bx^{(n)}\vert < \infty, \qquad \text{for all } \lambda \in (0,r).
\end{align}
In \cite{chevyrev2016characteristic} it was proven that if the expected signature has infinite convergence radius it characterizes the law of the signature, hence, the law of the underlying process when the signature itself is characterizing.
Thus, rendering the question of infinite convergence radius a central but in general difficult question. For Brownian motion it is a consequence of Fawcett's formula \cite{fawcett2002problems}, for Lévy-process a consequence of the signature Lévy-Khinchin formula in \cite{friz2017general}. Apart from that there is a negative answer in \cite{boedihardjo2021expected, li2022expected} for the case of Brownian motion stopped at the exit of a domain.
The following results gives necessary condition for time-inhomogeneous $\mathfrak{g}^N$-valued Lévy processes.
\begin{corollary}
Let $X, \esig$ and $\mathfrak{y}$ be as in \Cref{prop:levy}.
Then for any $N \in \mathbb{N}$ and $\lambda > 0$ it holds for all $0 \le t \le T$:
\begin{align*}
    \sum_{n=0}^N \lambda^n \vert \esig^{(n)}_t\vert \le \exp\left(\sum_{n=1}^N \lambda^n \int_t^T \vert \mathfrak{y}^{(n)}(s)\vert \dd{s} \right).
\end{align*}
In particular, $\esig_0$ has infinite convergence radius if $\int_0^T |\mathfrak{y}(u)| \dd{u}$ has.
\end{corollary}
\begin{proof}
Projecting \eqref{eq:levy_esig} to tensor level $n\in\mathbb{N}$ and using the compatibility of the norm on $(\Rd)^{\otimes n}$ we have for any $\lambda > 0$ and all $t\in[0,T]$:
\begin{align*}
    \lambda^{n}\vert\esig^{(n)}_t\vert ~\le~ \sum_{k=0}^{n} \int_t^T\lambda^k\vert\esig^{(k)}_s\vert  \lambda^{n-k}\vert\mathfrak{y}^{(n-k)}(s)\vert \dd{s}
\end{align*}
Summation of the above inequality yields for all $t \in [0,T]$:
\begin{align*}
    \sum_{n=0}^N \lambda^{n}\vert\esig^{(n)}_t\vert
    ~\le&~ 1 + \sum_{n=0}^N \sum_{k=0}^{n} \int_t^T\lambda^k\vert\esig^{(k)}_s\vert  \lambda^{n-k}\vert\mathfrak{y}^{(n-k)}(s)\vert \dd{s} \\
    ~\le&~ 1 + \int_t^T \left(\sum_{n = 0}^N \lambda^n\vert\esig^{(n)}_s\vert \right) \left(\sum_{n = 0}^N \lambda^{n}\vert\mathfrak{y}^{(n)}(s)\vert \right) \dd{s}.
\end{align*}
The stated estimate then follows from Gronwall's lemma and the second claim by talking taking limits for $N\to\infty$.
\end{proof}

\subsection{Brownian rough paths}\label{sec:brownian_rough_paths}

This section is framed within rough path terminology and is directed to a reader with special interest into the topic.
The unfamiliar reader may use \cite{friz2020course} as a starting point.
We denote by $\mathcal{G}^M \subset \TT^M_1$ the free step-$M$ nilpotent free Lie group and recall that the space of weakly geometric rough paths consists of continuous finite $p$-variation paths $\mathcal{W}\Omega^{c,p}_T = C^p([0,T]; (\mathcal{G}^M, \Aabss{\cdot}))$ with $M \le p < M+1$ and where $\Aabss{\cdot}$ is a homogeneous norm on $\mathcal{G}^M$.
In accordance with \cite{friz2017general, chevyrev2018random} we call a random variable $\mathbf{Y}$ with values in $\mathcal{W}\Omega^{c,p}_T$ a
\emph{Brownian rough path with drift} if it has stationary and independent increments $\mathbf{Y}_{s,t} := \mathbf{Y}_{s}^{-1}\mathbf{Y}_{t}$.
For the means of this section we denote the signature of $\bY$, i.e., the full Lyons lift, by $\bY^{<\infty}$.

Here, we will specifically consider the case where $M = 2N$ and $\bY = \bB$ is the Stratonovich development of Brownian motion in the free step-$N$ nilpotent Lie algebra $\mathfrak{g}^N = \log\mathcal{G}^N$.
More precisely, let $\mathbb{B} = \sum_{i\in\mathcal{I}} B^{i} \mathfrak{u}_i$, where $\{\mathfrak{u}_i\}_{i\in \mathcal{I}}$ is a orthonormal system in $\mathfrak{g}^N \subset \tf^N$ and $(B^{i})_{i\in\mathcal{I}}$ is a $|\mathcal{I}|$-dimensional Brownian motion with possibly mutually correlated coefficients.
Define the correlation matrix of $\mathbb{B}$ by
\begin{align*}
    \Sigma := \sum_{i,j\in \mathcal{I}}\E(B_1^iB_1^j)\mathfrak{u}_i\mathfrak{u}_j \;\;\in \mathrm{Sym}(\mathfrak{g}^N \otimes \mathfrak{g}^N) \subset \TT_0^{2N}.
\end{align*}
The truncated Stratonovich development $\bB := \pi_{(0,2N)}\Sig(\mathbb{B})_{0,\cdot}$
takes values in the group $\mathcal{G}^{2N}$ and following standard arguments from \cite[Section~3.2]{friz2020course} we see that the samples of $\mathbf{B}$ are (weakly) geometric rough path for any $p \in (2N, 2N+1)$.
Clearly, this is still the case after adding a drift $\mathbb{B} \leadsto \mathbb{B}(\mathfrak{y}) := (\mathbb{B} + t\mathfrak{y})_{t\ge0}$ for any constant vector $\mathfrak{y} \in \mathfrak{g}^{2N}$.
The resulting development $\mathbf{B}(\mathfrak{y})$ is a Brownian rough-path with drift, as independence and stationarity of the increments follow directly from the definition.
Its signature is given by the full development $\bB(\mathfrak{y})^{<\infty} = \Sig(\mathbb{B}(\mathfrak{y}))_{0,\cdot}$.

\Cref{thm:main_esig} implies the following generalization of Fawcett's formula.
\begin{corollary}\label{cor:brp}
The expected signature of a Brownian rough path $\bB(\mathfrak{y})$ with correlation $\Sigma$ and drift $\mathfrak{y}$ is given by $$\E\big(\mathbf{B}(\mathfrak{y})^{<\infty}_T\big) = \exp\bigg(T\mathfrak{y} + \frac{T}2 \Sigma \bigg).$$
\end{corollary}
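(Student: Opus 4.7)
The plan is to apply the functional equation for expected signatures (\Cref{thm:main_esig}) to the driving $\tf^{2N}$-valued semimartingale $\bX \coloneq \mathbb{B}(\mathfrak{y}) = \mathbb{B} + \bullet\mathfrak{y}$, and then verify that a simple deterministic exponential ansatz is its unique solution. Since each component $\bX^w$ ($|w|\le 2N$) is a Gaussian semimartingale plus a linear drift, all components have moments of every order, so $\bX \in \HSe^{\infty-}(\tf)$ and the theorem applies. By the definition of the correlation matrix one has $\QV{\bX}_u = u\Sigma$, while $\dd\bX_u = \dd\mathbb{B}_u + \mathfrak{y}\,\dd u$.

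Setting $\bA \coloneq \mathfrak{y} + \tfrac12\Sigma \in \tf^{2N}$, I would propose the ansatz $\esig_t(T) \coloneq \exp((T-t)\bA) \in \TT_1$, which is deterministic and smooth. The bracket $\innerbracket{\bX}{\esig}_{t,T}$ vanishes identically since $\esig$ is of finite variation and deterministic. Substituting the ansatz into the right-hand side of \eqref{eq:esig_master} and collecting the drift and quadratic-variation contributions yields
\[
1 + \E_t\!\int_t^T (\dd\mathbb{B}_u)\,\exp((T-u)\bA)\; +\; \int_t^T \bA\,\exp((T-u)\bA)\,\dd u.
\]
The stochastic integral of a deterministic integrand against the martingale $\mathbb{B}$ is itself a mean-zero martingale, so its conditional expectation vanishes. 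Because $\bA$ commutes with its own exponential, one has $\bA\exp((T-u)\bA) = -\tfrac{\dd}{\dd u}\exp((T-u)\bA)$, and the Riemann integral collapses to $\exp((T-t)\bA) - 1$. Adding the leading $1$ reproduces the ansatz, confirming that the functional equation is satisfied.

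By uniqueness in \Cref{thm:main_esig}, the ansatz \emph{is} the expected signature, and evaluating at $t = 0$ gives $\E(\bB(\mathfrak{y})^{<\infty}_T) = \exp(T\mathfrak{y} + \tfrac{T}{2}\Sigma)$. The argument is essentially a one-line verification and the Lévy-type structure of $\mathbb{B}(\mathfrak{y})$ makes the entire analysis trivial: no genuine obstacle arises, only the routine checks that $\bX \in \HSe^{\infty-}(\tf)$ (automatic from Gaussianity) and that $\bA$ commutes with $\exp((T-u)\bA)$ (tautological).
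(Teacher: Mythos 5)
Your proof is correct and takes essentially the same route as the paper: both rest on \Cref{thm:main_esig} applied to the Lévy-type driver $\mathbb{B}(\mathfrak{y})$ with deterministic bracket $\QV{\mathbb{B}(\mathfrak{y})}_t = t\Sigma$, together with the uniqueness of solutions to \eqref{eq:esig_master}. The only cosmetic difference is that the paper first shows inductively (via the recursion of \Cref{cor:recursion_mu}) that each $\esig^{(n)}$ is deterministic and then solves the resulting backward ODE, whereas you verify the exponential ansatz directly in the functional equation and invoke uniqueness — both steps are immediate.
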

\begin{remark}
    The above suggest to define a ``standard''  Brownian rough path by requiring additionally that $\log\E(\mathbf{Y}_{s,t}) = \frac{1}{2}(t-s)\sum_{i\in\mathcal{I}}(\mathfrak{u}_i)^2$ and that $\{\mathfrak{u}_i\}$ spans $\mathfrak{g}^N$.
    However, this is only a appropriate definition as far as it is appropriate to call the Stratonovich lift the ``standard'' lift of a Brownian motion.
\end{remark}

\begin{remark}
The results from the previous section, can be extended to time-inhomogeneous \textit{Lévy rough paths} in the  sense of \cite{friz2017general, chevyrev2018random}, without any additional effort.
Indeed, instead of starting with the  differential characteristics $(b, a, K)$  in $\Rd$ we could have likewise used characteristics in $\mathfrak{g}^N$, resulting in a $\mathfrak{g}^N$-valued semimartingale $\bX$ with independent increments.
The truncated Marcus lift $\bS = \pi_{(0,2N)}\Sig(\mathbb{X})$ then has independent group increments $\bS_{s,t} = \bS_{s}^{-1}\bS_{t}$ and the sample paths are \cadlag (weakly) geometric rough paths in the sense of \cite{friz2017general, friz2018differential}.
\end{remark}
\begin{proof}
From the definition we then have $\bB(\mathfrak{y}) := \pi_{(0,2N)}\Sig(\mathbb{B}(\mathfrak{y}))_{0,\cdot}$ where $\mathbb{B}$ is Brownian motion in $\mathfrak{g}^N$. For any $w \in \mathcal{W}_d$ and $q \ge 1$ we have
$$\Vert \mathbb{B}(\mathfrak{y})^w \Vert_{\HSe^q}
= \Big\Vert \QVsmall{ \sum_{i} B^i \mathfrak{u}_i^w}_T^{1/2}+ T\vert \mathfrak{y}^w\vert \Big\Vert_{\Lcal^{q}}  =  \sqrt{T}\Big\vert\sum_{i,j\in\mathcal{I}}\Sigma^{ij}\mathfrak{u}^w_{i}\mathfrak{u}^w_j\Big\vert^{1/2}  + T\vert \mathfrak{y}^w\vert  < \infty.$$
Hence, $\mathbb{B}(\mathfrak{y}) \in \HSe^{\infty-}$ and thus \Cref{thm:main_esig} and \Cref{cor:recursion_mu} apply to the conditional expected signature $\esig_t := \E_t(\Sig(\mathbb{B}(\mathfrak{y}))_{t,T})$.
Due to martingality of $\mathbb{B}$ we have 
$$\esig^{(1)}_t = \E\Big(\mathbb{B}^{(1)}_{t,T} + (T-t)\mathfrak{y}^{(1)}\Big) =  (T-t)\mathfrak{y}^{(1)}$$
Further, due to the deterministic quadratic variation $\QV{\mathbb{B}}_t = t\Sigma$ one sees inductively by following the recursion in \Cref{cor:recursion_mu} that $\esig^{(n)}$ is deterministic and hence by \Cref{thm:main_esig} satisfies
\begin{align*}
\esig_t  =1 + \int_t^T \Big( \mathfrak{y}+ \frac{1}{2} \Sigma\Big)\esig_u \dd{u}.
\end{align*}
We readily conclude by noting that the unique solutions to the above ordinary differential equation in $\TT_1$ is given by $\esig_t = e^{(T-t)(\mathfrak{y}+ \frac12\Sigma)}$.
\end{proof}

We mention two important examples:
Firstly, the zero mass limit of a \emph{physical Brownian motion in a magnetic field} (\cite{friz2015physical}, \cite[Section~3.4]{FHT22},\cite{li2023small}), which is a Brownian rough path with $N=1$ and drift $\mathfrak{y} = (0, 0, A)$ where $A\in\mathfrak{so}(\Rd\otimes\Rd)$ is an antisymmetric matrix that depends on the physical environment.
In line with \cite{li2023small} the above result immediately gives that the expected signature of the small mass limit is given by $e^{T(A +  \frac12\Sigma)}$.

As a second example we mention the recent finding by Hairer \cite{hairer2024renormalization} that the $H\to \frac{1}{4}$ limit of a suitably renormalized fractional Brownian motion converges to a pure area rough path with coefficients given by standard Brownian motion.
This limit is naturally included in our framework as a Brownian rough path of $\mathbb{B} = \sum_{i\in\mathcal{I}}\mathfrak{u}_i B^i$ where the corresponding orthonormal system of $\mathfrak{g}^2$ is given by $\{\mathfrak{u}_i : i\in\mathcal{I}\} = \{[e_i, e_j] : 1 \le i < j \le d\}$.
\Cref{cor:brp} yields the corresponding generalization of Fawcett's formula.
We note that it might be of interest to obtain this formula as the rescaled limits for $H\to \frac{1}{4}$ in \cite{cass2024wiener}.

\section{Conclusion}\label{sec:conclusion}

We summarize the results of this chapter and comment on their practical relevance.
An overview of the different formulas and corresponding recursions with the relation to recent and classical results from the literature is displayed in Figures~\ref{fig:theorems} and~\ref{fig:cube.recur}. 

\begin{figure}[h]
    \centering
    \begin{tikzcd}[math mode=false, cells={align=center,text width=10em}, sep=huge]
        {\small Func.~Eq.~\eqref{eq:master_sigcum} $\Sec$-Sig.~Cum. }\ar["{\small commutative}"description]{dd}\ar["{\small deterministic}"description]{rr}&&{\small   Schur-Hausdorff~Eq. \eqref{eqHDODE} $\Fv^c$-Log-Sig.}\ar{dd}\\
        &&\\
        {\small Func.~Eq. \eqref{eq:main_with_jumps_sym}\break $\Sec$-Cumulants} \ar{rr}&&trivial\ar[from=uu,crossing over]
    \end{tikzcd}
    \caption{An overview of the different formulas and corresponding recursions (in Figure~\ref{fig:cube.recur}) with the relation to recent and classical results from the literature. Recall that $\Se^{c}$ stands for continuous semimartingales and $\Fv^{c}$ for continuous processes of finite variation. Here, ``trivial'' refers to the linear equation $\dd{\hat\bS}=\hat\bS\dd{\hat\bX}$ with $\hat\bS_0=1\in\Sy$ in the commutative case $\hat\bX\in\Fv^c(\Sy)$, which is solved by $e^{\hat\bX-\hat\bX_0}$ with logarithm $\hat\bX-\hat\bX_0$.} %
    \label{fig:theorems}
\end{figure}
\begin{figure}[h]
    \centering
    \begin{tikzcd}[math mode=false, cells={align=center,text width=10em}, sep=huge]
        {\small Recursion \eqref{eq:recursion_h_form} $\Sec$-Sig.~Cum. }\ar["{\small commutative}"description]{dd}\ar["{\small deterministic}"description]{rr}&&{\small  Magnus Expansion\break$\Fv^c$-Log.~Sig.}\\
        &&\\
        {\small Diamond \break Expansion \eqref{eq:diamond_recursion}} \ar{rr}&&trivial\ar[from=uu,crossing over]
    \end{tikzcd}
    \caption{Accompanying recursions to Figure~\ref{fig:theorems}.}
    \label{fig:cube.recur}
\end{figure}

\begin{itemize}
    \item Theorem~\ref{thm:main_esig} and Theorem~\ref{thm:main_sigcum}, more precisely equation \eqref{eq:esig_master} and \eqref{eq:master_sigcum}, respectively characterize expected signatures and signature cumulants of semimartingales by dynamic formulas, with respect to the underlying filtration.
    The functional equation \eqref{eq:master_sigcum} for signature cumulants is represented in the top-left corner of Figure~\ref{fig:theorems}.

    \item Projecting \eqref{eq:esig_master} and \eqref{eq:master_sigcum} to tensor levels we obtain recursions for signature moments and signature cumulants respectively which are presented in \Cref{cor:recursion_mu} and \Cref{cor:recursion_h_form}.
    Specifically, these recursions represent $\esig^{(n)}$ in terms of $\esig^{(1)}, \dots, \esig^{(n-1)}$ (and analogously for $\kap$).
    The recursion for signature cumulants is represented by the top-left corner in Figure~\ref{fig:cube.recur}.

    \item Even when an explicit form of $\esig_t$ or $\kap_t$ is obtainable, the recursive nature of the expressions for each homogeneous component can be useful in some numerical scenarios.
    This provides a means of computation which can in principle be more efficient than the naive Monte Carlo approach.
    This is most evidently the case in discretized Markovian situations (compare with \Cref{expl:markov}), where the recursion from \Cref{thm:discrete} directly turns into a inductive backwards scheme, where conditional expectations can numerically be approximated using functional linear regression procedures.
 
    \item The most classical consequence of Theorem~\ref{thm:main_sigcum} appears when $\bX$ is a deterministic continuous semimartingale, i.e.,  in particular the components of $\bX$ are continuous paths of finite variation.
    The signature cumulants are then just the log-signature $\kap_t(T) = \log\Sig{(\bX)}_{t,T}$ and what remains of \eqref{eq:master_sigcum} is a classical differential equation due to \cite{schur1891theorie,hausdorff1906symbolische}, here in backward form
    \begin{equation} \label{eqHDODE}
        - \dd \kap_t (T) = H(\ad{\kap_{t}})\dd\bX_t.
     \end{equation} 
    The accompanying recursion is then precisely Magnus expansion \cite{magnus1954exponential, iserles1999solution, iserles2005lie, blanes2009magnus}.
    This transition to the deterministic case is represented in Figure~\ref{fig:theorems} and~\ref{fig:cube.recur} by going from the top-left to the top-right corner.
\item Projecting to the symmetric tensor algebra $\Sy$ (\Cref{sec:symmetric_tensor_algebra}, \Cref{sec:mc}), i.e., enforcing commutativity, signature moments and signature cumulants turn into the sequence of classical multivariate moments and cumulants of random variable $\bX_{0,T}$.
Projecting the functional equation \eqref{eq:master_sigcum} to $\Sy$ and recasting conditional quadratic variation increments as diamond products (\Cref{def:diamondSym}) we obtain the cumulant equation \eqref{eq:main_with_jumps_sym} originally obtain in \cite{friz2020cumulants}.
This consequence is depicted in Figure~\ref{fig:theorems} by going from the upper-left to the lower-left corner.

\item \Cref{eq:main_with_jumps_sym} should be compared with Riccati's ordinary differential equation from affine process theory \cite{duffie2003affine,cuchiero2011affine, keller2011affine}.
Of course, our results, in particular \cref{eq:master_sigcum,eq:main_with_jumps_sym}, are not restricted to affine semimartingales. In turn, expected signatures and cumulants - and subsequently all our statements above these - require moments, which is not required for the Riccati evolution of the characteristic function of affine processes.
    Of recent interest, explicit diamond expansions have been obtained for
   ``rough affine'' processes, non-Markov by nature, with cumulant generating function characterized by Riccati Volterra equations, see \cite{abijaber2019affine, gatheral2019affine,friz2020cumulants}. 
     It is remarkable that analytic tractability remains intact when one passes to path space and considers signature cumulants (see in particular \cite[Section~6.3]{FHT22}).

\item Projecting \eqref{eq:master_sigcum} to tensor levels, we obtained a \emph{diamond expansion} for multivariate cumulants \eqref{eq:diamond_recursion}.
Such expansions where previously obtained in
\cite{lacoin2019probabilistic, alos2018exponentiation,friz2020cumulants,Fuk21}, together with a range of applications, from quantitative finance (including rough volatility models \cite{abijaber2019affine, gatheral2019affine}), to statistical physics: in \cite{lacoin2019probabilistic} the authors rely on such formulas to compute the cumulant-generating function of log-correlated Gaussian fields, more precisely approximations thereof, that underlies the Sine-Gordon model, which is a key ingredient in their renormalization procedure. 

\item In mathematical finance, a case of particular interest is the $\Sy$-valued semimartingale 
$\hat \bX = (0,aX,b\langle X \rangle, 0, \dotsc)$, for a $d$-dimensional continuous martingale $X$.
The resulting diamond expansion from \cite{friz2020cumulants}, which improves and unifies previous results \cite{lacoin2019probabilistic, alos2018exponentiation}, allowing to calculate joint cumulants of log-price and integrated volatility \cite{friz2022diamonds}.

\item %
We end this summary with a comment on questions of \emph{convergence}: Basic facts of analytic functions show that {\em classical} moment- and cumulant-generating functions, for random variables with finite moments of all orders, have a radius of convergence $\rho \ge 0$,  directly related to growth of the corresponding sequence. Convergence, in the sense $\rho > 0$, implies that the moment problem is well-posed. That is, the moments (equivalently: cumulants) determine the law of the underlying random variable. (Note that this might not be the case if $\rho =0$. See also \cite{friz2020cumulants} for a related discussion in the context of diamond expansions.)
Similarly, understanding the growth of expected signatures or signature cumulants is important task: in a celebrated paper \cite{chevyrev2016characteristic} it was shown that under infinite convergence radius of the expected signature, in the sense of \eqref{eq:geometric_decay}, the ``expected signature problem'' is well-posed; that is, the expected 
signature (equivalently: signature cumulants) determines the law of the random signature. 
In \Cref{sec:levy} we show for the case of time-inhomogeneous Lévy processes how our functional equation lead to a verifiable sufficient condition for expected signature to have an infinite convergence radius.
\end{itemize}

\bibliographystyle{arxivalpha}
\bibliography{library}

\end{document}